
\documentclass{article}

\usepackage{microtype}
\usepackage{graphicx}
\usepackage{subfigure}
\usepackage{booktabs} 

\usepackage{hyperref}



\usepackage[accepted]{icml2025}

\usepackage{amsmath}
\usepackage{amssymb}
\usepackage{mathtools}
\usepackage{amsthm}

\usepackage{thmtools}
\usepackage{enumitem}
\usepackage{amsfonts}

\usepackage{bbm}

\DeclareMathOperator*{\argmin}{argmin}

\DeclareMathOperator*{\esssup}{ess\,sup}

\renewcommand{\cite}{\citet}

\usepackage[capitalize,noabbrev]{cleveref}

\theoremstyle{plain}
\newtheorem{theorem}{Theorem}[section]

\newtheorem{corollary}[theorem]{Corollary}
\theoremstyle{definition}

\theoremstyle{remark}

\usepackage[textsize=tiny]{todonotes}

\icmltitlerunning{Generalized Venn and Venn-Abers Calibration}

\begin{document}

\twocolumn[
\icmltitle{Generalized Venn and Venn-Abers Calibration \\
with Applications in Conformal Prediction}



\icmlsetsymbol{equal}{*}

\begin{icmlauthorlist}
\icmlauthor{Lars van der Laan}{uw}
\icmlauthor{Ahmed Alaa}{ucb}

\end{icmlauthorlist}

\icmlaffiliation{uw}{Department of Statistics, University of Washington}
\icmlaffiliation{ucb}{Computational Precision Health, UC Berkeley and UCSF}

\icmlcorrespondingauthor{Lars van der Laan}{lvdlaan@uw.edu}

\icmlkeywords{Calibration, multicalibration, Venn-Abers, conformal prediction,  isotonic calibration, distribution-free, epistemic uncertainty}

\vskip 0.3in
]



\printAffiliationsAndNotice{\icmlEqualContribution} 

\begin{abstract}%
 

Ensuring model calibration is critical for reliable prediction, yet popular distribution-free methods such as histogram binning and isotonic regression offer only asymptotic guarantees. We introduce a unified framework for \textit{Venn} and \textit{Venn-Abers} calibration that extends Vovk's approach beyond binary classification to a broad class of prediction problems defined by generic loss functions. Our method transforms any \textit{in-sample} calibrated point predictor into a set-valued predictor that, in finite samples, outputs at least one \textit{marginally} calibrated point prediction. These set predictions shrink asymptotically and converge to a conditionally calibrated prediction, capturing epistemic uncertainty. We further propose \textit{Venn multicalibration}, a new approach for achieving finite-sample calibration across subpopulations. For quantile loss, our framework recovers group-conditional and multicalibrated conformal prediction as special cases and yields novel prediction intervals with quantile-conditional coverage.

\end{abstract}

\section{Introduction}
Calibration is essential for ensuring that machine learning models produce reliable predictions and enable robust decision-making across diverse applications. Model calibration aligns predicted probabilities with observed event frequencies and predicted quantiles with the specified proportion of outcomes. Recent work formalizes calibration as the alignment of predictions with elicitable properties defined through minimization of an expected loss, thereby generalizing traditional notions of mean and quantile calibration \citep{noarov2023scope, gneiting2023regression}.  In safety-critical sectors such as healthcare, it is crucial to ensure that model-driven decisions are reliable under minimal assumptions \citep{mandinach2006theoretical, veale2018fairness, vazquez2022conformal, gohar2023survey}. \textit{Point calibrators}, which map a single model prediction to a single calibrated prediction (e.g., histogram binning and isotonic regression), can provide distribution-free calibration conditional on the calibration data. However, their guarantees are asymptotic, achieving zero calibration error only in the limit, and their performance may degrade in finite samples.

\begin{figure}[t]
\centering
\includegraphics[width=0.475\linewidth]{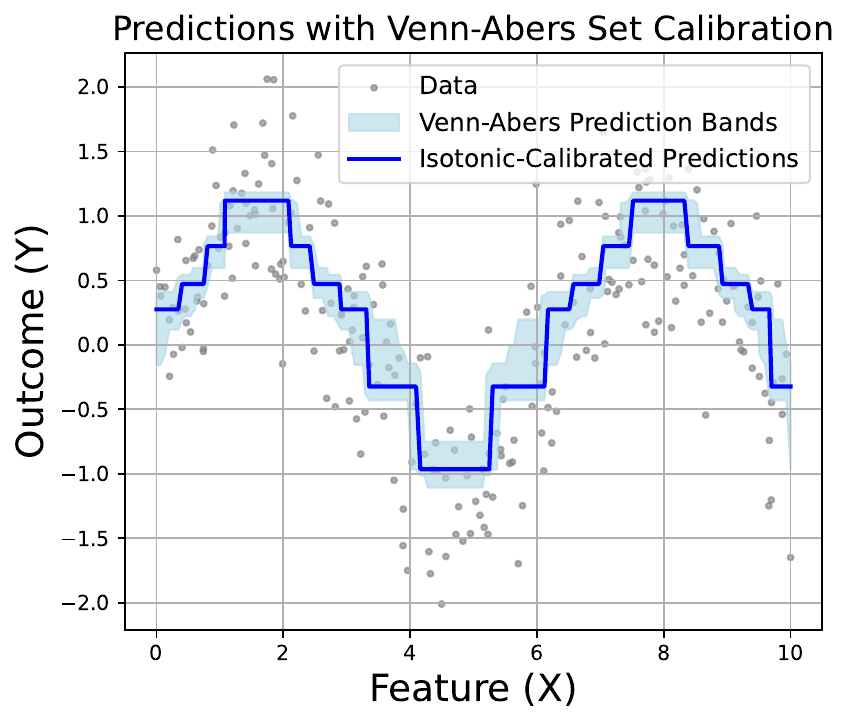}\includegraphics[width=0.475\linewidth]{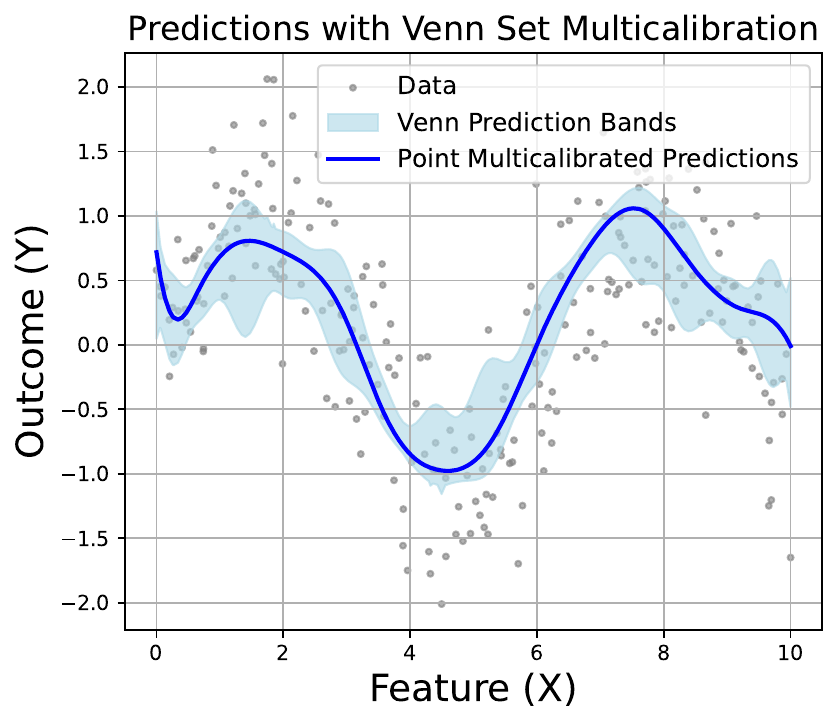}
\vspace{-0.125in}
\caption{Prediction bands capturing epistemic uncertainty in the calibration of point predictions generated by our proposed methods with a squared error loss.}
\vspace{-0.25in}
\end{figure}

To address these limitations, \textit{set calibrators} output an uncertainty set of calibrated point predictions. This class of methods includes Venn and Venn-Abers calibration \citep{vovk2003self, vovk2012venn, vanself} for classification and regression, and Conformal Prediction (CP) for quantile regression and predictive inference \citep{vovk2005algorithmic}. Set calibrators provide a prediction set that is guaranteed to contain a point prediction that is perfectly calibrated marginally over draws of the calibration data, and this set converges asymptotically to a single conditionally calibrated point prediction. Thus, set calibrators behave asymptotically like point calibrators while quantifying epistemic uncertainty in the calibration process in finite samples. For example, Venn calibration produces a set of calibrated probabilities, and CP generates a set of calibrated quantiles from which a prediction interval with finite-sample marginal coverage can be derived.

\textbf{Our Contributions.} ~We introduce a unified~framework~for Venn and Venn-Abers calibration, generalizing \cite{vovk2012venn} to a broad class of prediction tasks and loss functions. This framework extends point calibrators, e.g., histogram binning and isotonic regression, to produce prediction sets with finite-sample marginal and large-sample conditional calibration guarantees to capture epistemic uncertainty. We further propose \textit{Venn multicalibration}, ensuring finite-sample calibration across subpopulations. For quantile regression, we show that Venn calibration corresponds to a novel CP procedure with quantile-conditional coverage, and that multicalibrated conformal prediction \citep{gibbs2023conformal} is a special case of~Venn~multicalibration, unifying and extending existing calibration methods. Our approach enables the construction of set calibrators and set multicalibrators from point calibration algorithms for generic loss functions \citep{noarov2023scope}.

\section{Preliminaries for loss calibration}
\subsection{Notation}
We consider the problem of predicting an outcome \( Y \in \mathcal{Y} \) from a context \( X \in \mathcal{X} \), where \( Z := (X, Y) \) is drawn from an unknown distribution \( P := P_X P_{Y \mid X} \), on which we impose no distributional assumptions. Let \( f: \mathcal{X} \to \mathcal{Y} \) denote a predictive model trained to minimize a loss function \( (f(x), z) \mapsto \ell(f(x), z) \) using a machine learning algorithm on training data. For example, $\ell(f(x),z)$ could be the squared error loss $\{y - f(x)\}^2$ or the $(1-\alpha)$-quantile loss $\mathbbm{1}\{y \geq f(x)\}  \alpha (y - f(x)) + \mathbbm{1}\{y < f(x)\} (1-\alpha) (f(x) - y)$. We assume access to a calibration dataset \( \mathcal{C}_n = \{(X_i, Y_i)\}_{i=1}^{n} \) of \( n \) i.i.d. samples drawn from the same distribution \( P \), independent from the training data. Throughout this work, we treat the model \( f \) as fixed, implicitly conditioning on its training process.  

\subsection{Defining calibration for general losses}
 
Machine learning models, such as neural networks and gradient-boosted trees, are powerful predictors but often produce biased point predictions due to model misspecification, distribution shifts, or limited data \citep{zadrozny2001obtaining, niculescu2005obtaining, bella2010calibration, guo2017calibration, davis2017calibration}. To ensure reliable decision-making, we seek \emph{calibrated} models \citep{roth2022uncertain, silva2023classifier}. Informally, calibration means that the predictions are optimal conditional on the prediction itself, so they cannot be improved by applying a transformation to reduce the loss. Formally, a model $f$ is perfectly $\ell$-calibrated if \citep{noarov2023scope, whitehouse2024orthogonal}
\begin{equation}
    E_P[\ell(f(X), Z)] = \inf_{\theta} E_P[\ell(\theta(f(X)), Z)],
\end{equation}
where the infimum is taken over all one-dimensional transformations $\theta:\mathbb{R} \rightarrow \mathbb{R}$ of $f$. Perfect calibration implies that $f(x) = \argmin_{c \in \mathbb{R}} E_P[\ell(c, Z) \mid f(X) = f(x)]$ for all $x \in \mathcal{X}$. We assume that the loss \( \ell \) is smooth, such that \(\ell\)-calibration is equivalent to satisfying the first-order conditions \citep{whitehouse2024orthogonal}:
\begin{equation}
    E_P[\partial \ell(f(X), Z) \mid f(X) = f(x)] = 0   \text{ for all } x \in \mathcal{X},
\end{equation}
where \( \partial \ell(f(x), y) := \frac{d}{d\eta} \ell(\eta, y) \mid_{\eta = f(x)} \) denotes the partial derivative of \( \ell \) with respect to its first argument \( f(x) \).  

In regression, with $\ell$ taken as the squared error loss, \( f \) is \emph{perfectly calibrated} if \( E_P[Y \mid f(X) = f(x)] = f(x) \) for all \( x \in \mathcal{X} \) \citep{lichtenstein1977calibration, gupta2020distribution}, a property also known as self-consistency \citep{flury1996self}. This property addresses systematic over- or under-estimation and ensures that \( f(X) \) is a conditionally unbiased proxy for \( Y \) in decision making. In binary classification, calibration ensures that the score \( f(X) \) can be interpreted as a probability, making decision rules such as assigning label $1$ when \( f(X) > 0.5 \) valid on average, since \( P(Y=1 \mid f(X) > 0.5) > 0.5 \) \citep{silva2023classifier}.

For a model $\widehat{f}$, which depends randomly on the calibration set $\mathcal{C}_n$, we define two types of calibration: marginal and conditional \citep{vovk2012conditional}. A random model $\widehat{f}$ is \textit{conditionally} (perfectly) $\ell$-calibrated if it is calibrated conditional on the data $\mathcal{C}_n$: $ E_P[\partial \ell(\widehat{f}(X), Z) \mid \widehat{f}(X), \mathcal{C}_n]  = 0$ almost surely. The model $\widehat{f}$ is \textit{marginally} $\ell$-calibrated if it is calibrated marginally over $\mathcal{C}_n$: $\mathbb{E}[\partial \ell(\widehat{f}(X), Z) \mid \widehat{f}(X)]  = 0,$
where $\mathbb{E}$ is taken over the randomness in both $(X,Y)$ and $\mathcal{C}_n$.

\subsection{Post-hoc calibration via point calibrators}

Models trained for predictive accuracy often require post-hoc calibration, typically using an independent calibration dataset or, in some cases, the training data \citep{gupta2021distribution}. Post-hoc methods treat prediction and calibration as distinct tasks, each optimized for a different yet complementary objective. Since perfect calibration is unattainable in finite samples, the goal of calibration is to obtain a model \(\widehat{f}\) with minimal calibration error relative to a chosen metric, such as the conditional $\ell^2$ calibration error $\text{Cal}_{\ell^2}(\widehat{f})$, defined as \citep{whitehouse2024orthogonal}:
\begin{equation*}
 \int \left\{E_P[\partial \ell(\widehat{f}(X), Z) \mid \widehat{f}(X) = \widehat{f}(x), \mathcal{C}_n]\right\}^2 dP_X(x).
\end{equation*}

A \textit{point calibrator}, following \citet{van2023causal} and \citet{gupta2020distribution}, is a post-hoc procedure that learns a transformation \( \theta_n: \mathbb{R} \rightarrow \mathbb{R} \) of a black-box model \( f \) from $\mathcal{C}_n$ such that: (i) \( \theta_n(f(X)) \) is well-calibrated with low calibration error, and (ii) \( \theta_n(f(X)) \) remains comparably predictive to \( f(X) \) in terms of the loss $\ell$. Common point calibrators include Platt scaling \citep{platt1999probabilistic, cox1958two}, histogram binning \citep{zadrozny2001obtaining}, and isotonic calibration \citep{zadrozny2002transforming}.  

A~calibrated predictor can be constructed from $\mathcal{C}_n$ by learning the calibrator \( \theta_n \) via minimizing the empirical risk \( \sum_{i=1}^n \ell( \theta(f(X_i)), Z_i) \).~For regression, this involves regressing outcomes \( \{Y_i\}_{i=1}^n \) on predictions \( \{f(X_i)\}_{i=1}^n \) \citep{mincer1969evaluation}.~If \( \theta_n \) accurately estimates the conditional calibration function \( \min_{\theta} E_P[\ell(\theta(f(X)), Z) \,| \, \mathcal{C}_n] \), the predictor is well-calibrated by the tower property. This approach is not distribution-free and typically requires correct model specification. Methods like kernel smoothing and Platt scaling impose smoothness or parametric assumptions on the calibration function \citep{jiang2011smooth}.


\subsection{Distribution-free calibration via histogram binning}
A simple class of distribution-free calibration methods is histogram binning, such as uniform mass (quantile) binning \citep{gupta2020distribution, gupta2021distribution}. Here, the prediction space \( f(\mathcal{X}) \) is partitioned into \( K \) bins $\{B_k\}_{k=1}^K$ in an outcome-agnostic manner, often by taking quantiles of the predictions \( \{f(X_i)\}_{i=1}^n \) in \( \mathcal{C}_n \). The binning calibrator \( \theta_n \) is a step function obtained via histogram regression:
\[
\theta_n(t) := \min_{c \in \mathbb{R}} \sum_{i=1}^n \mathbbm{1}\{f(X_i) \in B_{k(t)}\} \ell(c, Z_i),
\]
where \( k(t) \) indexes the bin containing \( t \in f(\mathcal{X}) \). For the squared error loss, \( \theta_n(t) \) is the empirical mean of the outcomes \( \{Y_i : i \in [n], f(X_i) \in B_{k(t)}\} \) within the bin \( B_{k(t)} \). The histogram regression property ensures that the resulting calibrated model \( f_n^* := \theta_n \circ f \) is \textit{in-sample $\ell$--calibrated} \citep{van2024automatic}, meaning that, for each \( x \in \mathcal{X} \),
\begin{equation}
\sum_{i=1}^n \mathbbm{1}\{f_n^*(X_i) = f_n^*(x)\} \partial \ell(f_n^*(x), Z_i) = 0,\label{eqn::perfectcal}
\end{equation}
implying that its empirical risk cannot be improved by any transformation of its predictions.

In-sample calibration does not guarantee good out-of-sample calibration or predictive performance. With a maximal partition \( K = n \), perfect in-sample calibration leads to overfitting, resulting in poor out-of-sample performance. Conversely, with a minimal partition \( K = 1 \), the model is well-calibrated but poorly predictive, yielding a constant predictor \( \min_{c \in \mathbb{R}} \sum_{i=1}^n \ell(c, Z_i) \). This illustrates a trade-off: too few bins reduce predictive power, while too many increase variance and degrade calibration. Histogram binning asymptotically provides conditionally calibrated predictions, with the conditional $\ell^2$ calibration error satisfying \( \text{Cal}_{\ell^2}(f_n^*) = O_p\left(\frac{K \log (n/K)}{n}\right) \) \citep{whitehouse2024orthogonal}. Thus, tuning of \( K \), for example via cross-validation, is crucial to balance calibration and predictiveness. 

\section{Generalized Venn calibration framework}

\subsection{Venn calibration for general losses}

Suppose we are interested in obtaining an $\ell$-calibrated prediction $f(X_{n+1})$ for an unseen outcome $Y_{n+1}$ from a new context $X_{n+1}$, where $(X_{n+1}, Y_{n+1})$ is drawn from $P$ and independent of the calibration data $\mathcal{C}_n$. Let $\mathcal{A}_{\ell}$ be any point calibration algorithm that takes a model $f$ and calibration data $\mathcal{C}_n$ and outputs a refined model $f_n^* := \mathcal{A}_{\ell}(f, \mathcal{C}_n)$ that is in-sample $\ell$-calibrated in the sense of \eqref{eqn::perfectcal}. For example, $\mathcal{A}_{\ell}(f, \mathcal{C}_n)$ could be defined as $\theta_n \circ f$, where $\theta_n$ is learned using an outcome-agnostic histogram binning method, such as uniform mass binning, or an outcome-adaptive method, such as a regression tree or isotonic regression. We assume that the algorithm $\mathcal{A}_{\ell}$ processes input data exchangeably, ensuring the calibrated predictor is invariant to permutations of the calibration data.

While the calibrated model $\mathcal{A}_{\ell}(f, \mathcal{C}_n)$ achieves perfect in-sample calibration, it may exhibit poor out-of-sample and population-level calibration in finite samples. For example, models calibrated using histogram binning or isotonic regression are only asymptotically guaranteed to be conditionally calibrated, requiring sufficiently large sample sizes. To address this, \citet{vovk2003self} proposed Venn calibration for binary classification, which transforms point calibrators into set calibrators that quantify uncertainty in marginal calibration in finite samples while retaining conditional calibration asymptotically.

In this section, we extend Venn calibration to general prediction tasks defined by loss functions. We demonstrate that Venn calibration can be applied to any point calibrator that achieves perfect $\ell$-\textit{in-sample} calibration to ensure finite-sample \textit{marginal} calibration. Unlike traditional point calibrators, which produce a single calibrated prediction for each context, Venn calibration generates a prediction set that is guaranteed to contain a marginally perfectly $\ell$-calibrated prediction. This set quantifies calibration uncertainty by covering a range of possible marginally calibrated predictions, each of which remains asymptotically conditionally $\ell$-calibrated. A prominent example is Venn-Abers calibration, which employs isotonic regression as the underlying point calibrator \citep{vovk2012venn, vanself}.

Our generalized Venn calibration procedure is detailed in Alg.~\ref{alg::venn}. A distinctive aspect of Venn calibration is that it adapts the model $f$ specifically for the given context $X_{n+1}$, unlike point calibrators, which produce a single calibrated model intended to be (asymptotically) valid across all contexts. For a given context $X_{n+1}$, the algorithm iteratively considers imputed outcomes \( y \in \mathcal{Y} \) for \( Y_{n+1} \) and applies the calibrator $\mathcal{A}_{\ell}$ to the augmented dataset \mbox{\footnotesize\(\mathcal{C}_n \cup \{(X_{n+1}, y)\}\)}. This process yields a set of point predictions:
\[
f_{n, X_{n+1}}(X_{n+1}) := \{f_{n}^{(X_{n+1},y)}(X_{n+1}) : y \in \mathcal{Y}\}.
\]
When the outcome space \(\mathcal{Y}\) is continuous, Alg.~\ref{alg::venn} may be computationally infeasible to execute exactly and can instead be approximated by discretizing \(\mathcal{Y}\). Nonetheless, the range of the prediction set \(f_{n, x}(x)\) can often be computed by iterating over the extreme points \(\{y_{\text{min}}, y_{\text{max}}\}\) of \(\mathcal{Y}\).

\begin{algorithm}[htb!]
\caption{Venn loss calibration}
\label{alg::venn}
\begin{algorithmic}[1]{\footnotesize
\REQUIRE Calibration data~\mbox{\footnotesize $\mathcal{C}_n= \{(X_i, Y_i)\}_{i=1}^n$}, model \mbox{\footnotesize $f$}, context \mbox{\footnotesize $x \in \mathcal{X}$}, loss calibrator $\mathcal{A}_{\ell}$.

\vspace{.02in}
 
\vspace{.05in}
\FOR{each $y \in \mathcal{Y}$}
\vspace{.025in}
\STATE \mbox{\footnotesize augment dataset: $\mathcal{C}_{n}^{(x,y)} := \mathcal{C}_n \cup \{ (x, y)\}$}; \\
\STATE \mbox{\footnotesize calibrate model: $f_{n}^{(x,y)} := \mathcal{A}_{\ell}(f, \mathcal{C}_{n}^{(x,y)})$;} \\
\vspace{.025in}
\ENDFOR
\STATE set $f_{n, x}(x) := \{f_{n}^{(x,y)}(x) : y \in \mathcal{Y} \}$;
\ENSURE \mbox{\footnotesize prediction set $f_{n, x}(x)$}}.
\end{algorithmic}
\end{algorithm}

To establish the validity of Venn calibration, we impose the following conditions, which ensure that the data are exchangeable ---   a common assumption in conformal prediction \citep{vovk2005algorithmic}, particularly satisfied when the data are i.i.d --- and that the derivative of the loss has a finite second moment. 

 \begin{enumerate}[label=\bf{C\arabic*)}, ref={C\arabic*}, series=theorem]
  \item \textit{Exchangeability:} $\{(X_i, Y_i)\}_{i=1}^{n+1}$ are exchangeable. \label{cond::exchange}
 \item \textit{Finite variance:}\label{cond::variance}   $\mathbb{E}[\{\partial \ell(f_{n+1}^*(X_{n+1}), Z_{n+1})\}^2] < \infty$.  
    \item \textit{Perfect in-sample calibration:} $ \sum_{i=1}^{n+1} \mathbbm{1}\{f_{n+1}^*(X_i) = f_{n+1}^*(x)\} \partial \ell(f_{n+1}^*(X_i), Z_i) = 0$ almost surely for each \( x \in \mathcal{X} \). \label{cond::empcal}
\end{enumerate}

The finite-sample validity of the Venn calibration procedure can be established through an \textit{oracle} procedure that assumes knowledge of the unseen outcome \( Y_{n+1} \). In this oracle procedure, a perfectly in-sample $\ell$-calibrated prediction {\footnotesize \( f_{n+1}^*(X_{n+1}) := \mathcal{A}_{\ell}(f, \mathcal{C}_{n+1}^*)(X_{n+1}) \)} is obtained by calibrating \( f \) using \( \mathcal{A}_{\ell} \) on the oracle-augmented calibration set \( \mathcal{C}_{n+1}^* := \mathcal{C}_n \cup \{(X_{n+1}, Y_{n+1})\} \). By leveraging exchangeability, the in-sample calibration of the oracle prediction \( f_{n+1}^*(X_{n+1}) \) ensures \textit{marginal} perfect $\ell$-calibration, such that \( \mathbb{E}[\partial \ell(f_{n+1}^*(X_{n+1}), Z_{n+1}) \mid f_{n+1}^*(X_{n+1})] = 0 \) almost surely. Since the oracle prediction is, by construction, contained in the Venn prediction set \( f_{n, X_{n+1}}(X_{n+1}) \), we conclude the following theorem.

\begin{theorem}[Marginal calibration of Venn prediction]
      \label{theorem::VennGeneral} Under \ref{cond::exchange}-\ref{cond::empcal}, the Venn prediction set $f_{n, X_{n+1}}(X_{n+1})$ contains the marginally perfectly $\ell$-calibrated prediction, $f_{n+1}^* (X_{n+1}) = f_n^{(X_{n+1}, Y_{n+1})}$, which satisfies 
   $$\mathbb{E} \left[\left\{\mathbb{E}[\partial \ell(f_{n+1}^*(X_{n+1}), Z_{n+1}) \mid f_{n+1}^*(X_{n+1})]  \right\}^2 \right] = 0.$$
 
\end{theorem}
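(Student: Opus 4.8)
The plan is to reduce everything to the \emph{oracle} predictor $f_{n+1}^* := \mathcal{A}_{\ell}(f, \mathcal{C}_{n+1}^*)$ built on $\mathcal{C}_{n+1}^* = \mathcal{C}_n \cup \{(X_{n+1}, Y_{n+1})\}$, and to show two things: (i) the value $f_{n+1}^*(X_{n+1})$ lies in the Venn set $f_{n, X_{n+1}}(X_{n+1})$, and (ii) this value is marginally perfectly $\ell$-calibrated. Claim (i) is immediate from the construction: running Alg.~\ref{alg::venn} with context $x = X_{n+1}$ and imputed label $y = Y_{n+1}$ produces the augmented set $\mathcal{C}_n^{(X_{n+1},Y_{n+1})} = \mathcal{C}_{n+1}^*$, hence $f_n^{(X_{n+1},Y_{n+1})} = f_{n+1}^*$, and since $Y_{n+1} \in \mathcal{Y}$ the value $f_{n+1}^*(X_{n+1})$ is one of the elements defining $f_{n, X_{n+1}}(X_{n+1})$.

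For claim (ii), fix an arbitrary bounded measurable $g:\mathbb{R}\to\mathbb{R}$ and set $h_i := g(f_{n+1}^*(X_i))\,\partial\ell(f_{n+1}^*(X_i), Z_i)$ for $i = 1,\dots,n+1$. The first key step is that $\sum_{i=1}^{n+1} h_i = 0$ almost surely: partition the indices $\{1,\dots,n+1\}$ according to the finitely many distinct values attained by $f_{n+1}^*$ on $X_1,\dots,X_{n+1}$; on each such level set $g$ is constant and, by the empirical-calibration identity \ref{cond::empcal} applied with $x$ any point of that level set, the sum of $\partial\ell(f_{n+1}^*(X_i),Z_i)$ over the set vanishes. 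The second key step is a symmetrization argument: since $\mathcal{A}_{\ell}$ processes its input exchangeably, $f_{n+1}^*$ is a permutation-invariant functional of $(Z_1,\dots,Z_{n+1})$, so each $h_i$ is obtained by applying one common map — symmetric in the remaining $n$ data points — to $Z_i$ and $\{Z_j\}_{j\neq i}$; exchangeability \ref{cond::exchange} of $(Z_1,\dots,Z_{n+1})$ then makes $(h_1,\dots,h_{n+1})$ an exchangeable vector, in particular identically distributed, while \ref{cond::variance} (with Cauchy–Schwarz: $|h_i| \le \|g\|_\infty\,|\partial\ell(f_{n+1}^*(X_i),Z_i)|$ and a finite second moment implies a finite first moment) ensures each $h_i$ is integrable with one common finite mean. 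Combining the two steps, $0 = \mathbb{E}\big[\sum_{i=1}^{n+1} h_i\big] = (n+1)\,\mathbb{E}[h_{n+1}]$, so $\mathbb{E}\big[g(f_{n+1}^*(X_{n+1}))\,\partial\ell(f_{n+1}^*(X_{n+1}), Z_{n+1})\big] = 0$. Letting $g$ range over all bounded measurable functions, this is precisely $\mathbb{E}[\partial\ell(f_{n+1}^*(X_{n+1}), Z_{n+1}) \mid f_{n+1}^*(X_{n+1})] = 0$ almost surely, and the displayed identity of the theorem follows since the square of an almost-surely-zero random variable has zero expectation.

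I expect the main obstacle to be the bookkeeping in the symmetrization step: carefully articulating that permutation-invariance of $\mathcal{A}_{\ell}$ upgrades exchangeability of the raw data to equality in distribution (hence equality of means) of the summands $h_i$, rather than merely exchangeability of the triples $(X_i, Y_i, f_{n+1}^*(X_i))$. Everything else — the level-set grouping, the integrability bound, and the passage from the conditional-mean-zero statement to the squared-expectation form — is routine once this is in place.
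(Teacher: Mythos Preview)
Your proposal is correct and follows essentially the same route as the paper: derive the identity $\sum_{i=1}^{n+1} g(f_{n+1}^*(X_i))\,\partial\ell(f_{n+1}^*(X_i),Z_i)=0$ from \ref{cond::empcal}, use exchangeability and permutation-invariance of $\mathcal{A}_\ell$ to conclude each summand has mean zero, then specialize $g$ to extract the conditional-expectation statement. The only minor difference is that you work with bounded test functions $g$ (making integrability immediate) whereas the paper plugs in $g = \mathbb{E}[\partial\ell(f_{n+1}^*(X_{n+1}),Z_{n+1})\mid f_{n+1}^*(X_{n+1}) = \cdot]$ directly, but both arrive at the same conclusion.
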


In order to satisfy \ref{cond::empcal}, Algorithm~\ref{alg::venn} should be applied with a binning-based calibrator \( \mathcal{A}_{\ell} \) that achieves perfect in-sample calibration, such as uniform mass binning, a regression tree, or isotonic regression. Importantly, this condition does not impose restrictions on how the bins are selected, allowing pre-specified and data-adaptive binning schemes. While the choice of binning calibrator does not affect the marginal perfect calibration guarantee in Theorem \ref{theorem::VennGeneral}, it influences both the width of the Venn prediction set and the conditional calibration of the point predictions. In general, using a point calibrator that ensures conditionally well-calibrated predictions, such as histogram binning with appropriately tuned bins or isotonic calibration, is recommended. 

For example, in the regression setting with squared error loss, Theorem \ref{theorem::VennGeneral} ensures that the set $f_{n, X_{n+1}}(X_{n+1})$ contains \( f_{n+1}^*(X_{n+1}) = \mathbb{E}[Y_{n+1} \mid f_{n+1}^*(X_{n+1})] \). However, using histogram binning with one observation per bin results in \( f_{n, X_{n+1}}(X_{n+1}) = \mathcal{Y} \), an uninformative set that poorly reflects conditional calibration despite including \( f_{n+1}^*(X_{n+1}) = Y_{n+1} \). In contrast, using a single bin produces a set containing the marginally perfectly calibrated prediction \( f_{n+1}^*(X_{n+1})  = \frac{1}{n+1}\sum_{i=1}^{n+1} Y_i \), where each prediction is close to the sample mean \( \frac{1}{n}\sum_{i=1}^n Y_i \), ensuring conditional calibration but resulting in poor predictiveness.

Suppose that $\mathcal{A}_{\ell}(f, \mathcal{C}_n \cup \{x,y\})$ outputs a transformed model $\theta_n^{(x,y)} \circ f$, where $\theta_n^{(x,y)}$ is learned using an outcome-agnostic or outcome-adaptive binning calibrator, such as quantile binning or a regression tree. The following theorem shows that each calibrated model \(f_{n}^{(x,y)} = \theta_{n}^{(x,y)} \circ f\), used to construct the Venn prediction set, is asymptotically conditionally calibrated, provided the calibration data are i.i.d. and the number of bins in the calibrator \( \theta_{n}^{(x,y)} \) does not grow too quickly.

 \begin{enumerate}[label=\bf{C\arabic*)}, ref={C\arabic*}, resume=theorem]
  \item \textit{Independence:} $\{(X_i, Y_i)\}_{i=1}^{n+1}$ are i.i.d. \label{cond::iid}
 \item  \textit{Boundedness:} \label{cond::bounded}   \( \esssup_{z'= (x',y')} |\partial \ell(\theta_{n}^{(x,y)}(f(x')), z')|  \) and \( \esssup_{x'} | \theta_{n}^{(x,y)}(x')|  \) are bounded by a constant \( M < \infty \).
  \item  \textit{Lipschitz derivative:} \label{cond::lipschitz} There exists $L < \infty$ such that $|\partial \ell(\eta_1,z) - \partial \ell(\eta_2,z)| \leq L |\eta_1 - \eta_2|$ for all $z, \eta_1, \eta_2$.
  \item  \textit{Finite number of bins:} $\theta_{n}^{(x,y)}$ is piecewise constant taking at most $k(n) < \infty$ values.  \label{cond::bins}

\end{enumerate}

\begin{theorem}[Conditional calibration of Venn calibration]
     \label{theorem::condcalVA} Under \ref{cond::iid}-\ref{cond::bins}, we have $\text{Cal}_{\ell^2}(f_{n}^{(x,y)}) = O_p(\frac{k(n) \log (n / k(n))}{n})$.
\end{theorem}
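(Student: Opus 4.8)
The plan is to reduce the statement to a uniform, self-normalized empirical-process bound over the (data-adaptive) level sets of $f_n^{(x,y)}$, following the histogram-binning analysis behind the rate $\text{Cal}_{\ell^2}(f_n^*) = O_p(K\log(n/K)/n)$ quoted earlier, with one extra step to dispose of the single deterministic augmentation point. Fix $(x,y)$ and abbreviate $\theta := \theta_n^{(x,y)}$, $f_n := \theta\circ f$, and $g_n(z') := \partial\ell(\theta(f(x')),z')$ for $z'=(x',y')$; by \ref{cond::bounded} we have $|g_n| \le M$. By \ref{cond::bins}, $f_n$ takes at most $k(n)$ values, so write $B_1,\dots,B_{k(n)} \subseteq f(\mathcal{X})$ for the corresponding level sets, $p_k := P_X(f(X)\in B_k)$, and $\mu_k := E_P[g_n \mid f(X)\in B_k]$ (with the $k$-th summand below read as $0$ when $p_k = 0$), so that $\text{Cal}_{\ell^2}(f_n) = \sum_{k=1}^{k(n)} p_k\mu_k^2$. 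First I would turn perfect empirical calibration into an exact identity: the empirical calibration property \eqref{eqn::perfectcal} of $\mathcal{A}_\ell$ applied to the augmented sample $\mathcal{C}_n\cup\{(x,y)\}$ gives $P_{n+1}[g_n\mathbbm 1\{f\in B_k\}] = 0$ for every $k$, where $P_{n+1}$ denotes the empirical distribution of $\mathcal{C}_n\cup\{(x,y)\}$; hence $p_k\mu_k = P[g_n\mathbbm 1\{f\in B_k\}] = (P-P_{n+1})[g_n\mathbbm 1\{f\in B_k\}]$ and
\begin{equation*}
\text{Cal}_{\ell^2}(f_n) = \sum_{k=1}^{k(n)} \frac{1}{p_k}\Big( (P-P_{n+1})\big[g_n\mathbbm 1\{f\in B_k\}\big]\Big)^2 .
\end{equation*}

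Next I would remove the single deterministic point. Writing $P_{n+1} = \tfrac{n}{n+1}P_n + \tfrac{1}{n+1}\delta_{(x,y)}$ with $P_n$ the empirical distribution of the i.i.d.\ sample $\mathcal{C}_n$ (condition \ref{cond::iid}), each summand splits, using $(a+b)^2 \le 2a^2+2b^2$, into a stochastic part driven by $(P-P_n)$ and a deterministic remainder of magnitude at most $\tfrac{M}{n+1}\big(p_k + \mathbbm 1\{f(x)\in B_k\}\big)$. Summed against $p_k^{-1}$, the remainders contribute $O(n^{-2}\sum_k p_k) = O(n^{-2})$ from the bins not containing $f(x)$ and a single term of order $\big(p_{k_0}(n+1)^2\big)^{-1}$ from the bin $B_{k_0}\ni f(x)$; since $B_{k_0}$ contains the augmented point, $P_{n+1}[\mathbbm 1\{f\in B_{k_0}\}] \ge (n+1)^{-1}$, and a multiplicative Chernoff bound applied uniformly over the finitely many realizable partitions (see below) yields $p_{k_0} \gtrsim n^{-1}$ with probability tending to one, so that term is $O_p(n^{-1})$. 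It therefore suffices to bound $\sum_k p_k^{-1}\big((P-P_n)[g_n\mathbbm 1\{f\in B_k\}]\big)^2$.

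The crux, and what I expect to be the main obstacle, is the uniform self-normalized bound
\begin{equation*}
\sum_{k=1}^{k(n)} \frac{1}{p_k}\Big( (P-P_n)\big[g_n\mathbbm 1\{f\in B_k\}\big]\Big)^2 = O_p\!\left(\frac{k(n)\log(n/k(n))}{n}\right),
\end{equation*}
which I would establish by the argument used for histogram binning in \citet{whitehouse2024orthogonal}. The level sets $B_1,\dots,B_{k(n)}$ are data-adaptive, but by \ref{cond::bins} they form a partition of the prediction space into at most $k(n)$ cells drawn from a family of low combinatorial complexity (intervals for isotonic calibration; axis-aligned cells for regression trees or histogram binning), so the number of distinct partitions of $\{f(X_i)\}_{i=1}^{n}$ that can arise is $(n+1)^{O(k(n))}$, with logarithm $O(k(n)\log(n/k(n)))$ --- this is precisely where \ref{cond::bins} enters, and it is the source of the $\log(n/k(n))$ factor. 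For any fixed such partition, $\sum_k p_k^{-1}\big((P-P_n)[g_n\mathbbm 1\{f\in B_k\}]\big)^2$ is a $\chi^2$-type functional with mean $O(M^2 k(n)/n)$, because each term has variance proxy at most $M^2 p_k/n$ by \ref{cond::bounded}; a Bernstein/Talagrand tail bound controls its fluctuations, the Lipschitz condition \ref{cond::lipschitz} lets one replace the level values $\theta(B_k)$ by a $1/n$-net without affecting the complexity order, and a union bound over the realizable partitions inflates the rate by the $\log(n/k(n))$ factor. Assembling the three steps and absorbing $M$ and $L$ into the $O_p$ yields $\text{Cal}_{\ell^2}(f_n^{(x,y)}) = O_p(k(n)\log(n/k(n))/n)$.

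The delicate point --- and where I expect the real technical work to be --- is carrying the $p_k^{-1}$ self-normalization through adaptively chosen, possibly very small, bins, since the sup-norm envelope of the self-normalized functions degrades with $\min_k p_k$; as in \citet{whitehouse2024orthogonal} I would handle this by peeling over dyadic ranges of $p_k$, bounding small-$p_k$ bins through the slow (sub-exponential) term of the Bernstein inequality together with the multiplicative concentration $p_k \asymp \widehat p_k$ (up to an additive $O(k(n)\log(n/k(n))/n)$) uniform over the realizable partitions, so that their total contribution is again $O_p(k(n)\log(n/k(n))/n)$. Everything else --- the two identities, the deterministic-remainder bookkeeping, and the net over level values --- is routine given \ref{cond::iid}--\ref{cond::bins}.
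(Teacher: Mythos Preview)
Your outline is sound but follows a genuinely different route from the paper. The paper never decomposes bin by bin and never faces the $p_k^{-1}$ self-normalization. Instead it plugs the single test function $\gamma_f(x') := E_P[\partial\ell(f_n^{(x,y)}(X),Z)\mid f_n^{(x,y)}(X)=f_n^{(x,y)}(x'),\mathcal{C}_n]$ into the empirical calibration identity; the augmented point contributes one bounded term, giving $P_n[\gamma_f\,\partial\ell(f_n^{(x,y)},\cdot)]=O(n^{-1})$, and the tower property yields $P[\gamma_f\,\partial\ell(f_n^{(x,y)},\cdot)]=P[\gamma_f^2]=\text{Cal}_{\ell^2}(f_n^{(x,y)})=:\widehat\delta_n^2$, so one arrives at the self-localizing inequality
\[
\widehat\delta_n^2 \le \sup_{\theta\in\mathcal{F}_n:\,\|\gamma_f(\theta,\cdot)\|_P\le\widehat\delta_n}\big|(P_n-P)[\gamma_f(\theta,\cdot)\,\partial\ell(\theta\circ f,\cdot)]\big| + O(n^{-1}),
\]
with $\mathcal{F}_n$ the class of $k(n)$-piece step functions. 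The supremum is then controlled by the entropy integral $\mathcal{J}(\delta,\mathcal{F}_n)\lesssim\delta\sqrt{k(n)\log(1/\delta)}$ (the Lipschitz condition transferring this to the product class), a local maximal inequality, and a peeling argument over the value of $\widehat\delta_n$. Your route is more elementary---Bernstein plus a union bound over $(n+1)^{O(k(n))}$ realizable partitions---and stays close to the histogram-binning analysis you cite, at the cost of the $p_k^{-1}$ bookkeeping you correctly flag as the crux. The paper's single-test-function device sidesteps that entirely and also carries over verbatim to the Venn--Abers rate, whereas your partition-counting would need reworking there.

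One small gap: the claim that $p_{k_0}\gtrsim n^{-1}$ with probability tending to one is not justified by multiplicative Chernoff. The bin $B_{k_0}$ may contain only the deterministic augmented point and none of the i.i.d.\ observations, so $P_n[\mathbbm 1\{f\in B_{k_0}\}]=0$ and no concentration ties $p_{k_0}$ to $1/n$; indeed $p_{k_0}$ can be arbitrarily small or zero. The fix is immediate: split on whether $p_{k_0}\ge 1/n$. If so, your remainder is $O(n^{-1})$ as written; if not, bound that bin's contribution directly by $p_{k_0}\mu_{k_0}^2\le M^2 p_{k_0}< M^2/n$ and skip the $(a+b)^2$ split for that term.
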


For stable point calibrators, as the calibration set size \( n \) increases, the Venn prediction set narrows and converges to a single perfectly $\ell$-calibrated prediction \citep{vovk2012venn}. In large-sample settings, where standard point calibrators perform reliably, the Venn prediction set becomes narrow, closely resembling a point prediction. In contrast, in small-sample settings, where overfitting can undermine the reliability of point calibrators such as histogram binning and isotonic calibration, the Venn prediction set widens, reflecting increased uncertainty about the true calibrated prediction \citep{johansson2023well}. Consequently, Venn calibration improves the robustness of the point calibration procedure \( \mathcal{A}_{\ell} \) by explicitly representing uncertainty through a set of possible calibrated predictions.

\subsection{Isotonic and Venn-Abers calibration}

Venn calibration can be applied with any loss calibrator \( \mathcal{A}_{\ell} \) that provides in-sample calibrated predictions. While histogram binning requires pre-specifying the number of bins, isotonic calibration \citep{zadrozny2002transforming, niculescu2005obtaining} addresses this limitation by adaptively determining bins through isotonic regression, a nonparametric method for estimating monotone functions \citep{barlow1972isotonic}. Instead of fixing \( K \) in advance, isotonic calibration selects bins by minimizing an empirical MSE criterion, ensuring the calibrated predictor is a non-decreasing monotone transformation of the original predictor. Isotonic calibration allows the number of bins to grow with sample size, ensuring good calibration while preserving predictive performance. \citet{van2023causal} show that, in the context of treatment effect estimation, the conditional $\ell^2$ calibration error of isotonic calibration for i.i.d. data asymptotically satisfies \( \text{Cal}_{\ell^2}(f_n^*) = O_p(n^{-2/3}) \).

  \begin{algorithm}[htb!]
\caption{Venn-Abers loss calibration}
\label{alg::VAgen}
\begin{algorithmic}[1]{\footnotesize
\REQUIRE Calibration data~\mbox{\footnotesize $\mathcal{C}_n= \{(X_i, Y_i)\}_{i=1}^n$}, model \mbox{\footnotesize $f$},  loss $\ell$, context \mbox{\footnotesize $x \in \mathcal{X}$}.

\vspace{.02in}
 
\vspace{.05in}
\FOR{each $y \in \mathcal{Y}$}
\vspace{.025in}
\STATE \mbox{\footnotesize augment dataset: $\mathcal{C}_{n}^{(x,y)} := \mathcal{C}_n \cup \{(X_{n+1}, Y_{n+1}) := (x, y)\}$}; \\
\STATE calibrate model using generalized isotonic regression: 

    \vspace{.05in}
    \,\mbox{\footnotesize $\theta_{n}^{(x,y)} := \argmin_{\theta \in \Theta_{\text{iso}}} \sum_{i \in \mathcal{C}_{n}^{(x,y)}} \ell( \theta(f(X_i)), Z_i)$}.

    $f_{n}^{(x,y)}: = \theta_{n}^{(x,y)} \circ f$.
    \vspace{.025in}\\
\vspace{.025in}
\ENDFOR
\STATE set $f_{n, x}(x) := \{f_{n}^{(x,y)}(x) : y \in \mathcal{Y} \}$;
\ENSURE \mbox{\footnotesize prediction set $f_{n, x}(x)$}}.
\end{algorithmic}
\end{algorithm}

In this section, we propose Venn-Abers calibration for general loss functions, a special instance of Venn calibration that employs isotonic regression as the underlying point calibrator, thereby generalizing the original procedure for classification and regression \citep{vovk2012venn, vanself}. Our generalized Venn-Abers calibration procedure is outlined in Alg.~\ref{alg::VAgen}. Isotonic regression is a stable algorithm, meaning small changes in the training set do not significantly affect the solution, ensuring that the Venn-Abers prediction set converges to a point prediction as the sample size grows \citep{caponnetto2006stability, bousquet2000algorithmic}. Consequently, the Venn-Abers prediction set inherits the marginal calibration guarantee of Venn calibration, while each point prediction in the set is conditionally calibrated in large samples.

Let \( f_{n, X_{n+1}}(X_{n+1}) \) denote the Venn-Abers prediction set obtained by applying Alg.~\ref{alg::VAgen} with \( x = X_{n+1} \). The following theorem follows directly from Theorem \ref{theorem::VennGeneral}.

\begin{theorem}[Marginal calibration of Venn-Abers]
      \label{theorem::vennabers} Under \ref{cond::exchange} and \ref{cond::variance}, the Venn prediction set $f_{n, X_{n+1}}(X_{n+1})$ contains the marginally perfectly $\ell$-calibrated prediction, $ f_{n+1}^*(X_{n+1}) := f_n^{(X_{n+1}, Y_{n+1})}$, which satisfies 
   $$\mathbb{E} \left[\left\{\mathbb{E}[\partial \ell(f_{n+1}^*(X_{n+1}), Z_{n+1}) \mid f_{n+1}^*(X_{n+1})]  \right\}^2 \right] = 0.$$
 
\end{theorem}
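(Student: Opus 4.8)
The plan is to deduce Theorem~\ref{theorem::vennabers} directly from Theorem~\ref{theorem::VennGeneral} by observing that Venn--Abers calibration (Alg.~\ref{alg::VAgen}) is precisely the generalized Venn calibration procedure (Alg.~\ref{alg::venn}) instantiated with the point calibrator $\mathcal{A}_\ell(f,\mathcal{D}) := \theta_{\mathcal{D}}\circ f$, where $\theta_{\mathcal{D}} \in \argmin_{\theta\in\Theta_{\mathrm{iso}}}\sum_{i\in\mathcal{D}}\ell(\theta(f(X_i)),Z_i)$ is the generalized isotonic regression fit over the class $\Theta_{\mathrm{iso}}$ of nondecreasing functions. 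Theorem~\ref{theorem::VennGeneral} requires \ref{cond::exchange}--\ref{cond::empcal}. Conditions \ref{cond::exchange} and \ref{cond::variance} are assumed in the statement, so the entire task reduces to checking that this particular $\mathcal{A}_\ell$ (i) processes its input data exchangeably and (ii) always returns a perfectly $\ell$-empirically calibrated predictor, i.e. satisfies \ref{cond::empcal}. Both are structural features of isotonic regression that hold without further assumptions, which is exactly why the present theorem needs only \ref{cond::exchange} and \ref{cond::variance}.

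Property (i) is immediate: the objective $\sum_{i\in\mathcal{D}}\ell(\theta(f(X_i)),Z_i)$ and the feasible set $\Theta_{\mathrm{iso}}$ depend on $\mathcal{D}$ only through the multiset $\{(f(X_i),Z_i):i\in\mathcal{D}\}$, so the fitted model $f_n^{(x,y)}$ is invariant under permutations of the calibration data. For property (ii), I would use the first-order optimality (pool-adjacent-violators / KKT) structure of isotonic regression. A minimizer $\theta_{n+1}^*$ over $\Theta_{\mathrm{iso}}$ can be taken piecewise constant, and its distinct level sets partition the augmented indices $\{1,\dots,n+1\}$ into ``blocks'' that are consecutive in the order induced by $f$ and whose fitted values are strictly separated across neighbouring blocks. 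Fix a block $\mathcal{B}$ with common value $c^*$. Since monotonicity is slack across the boundary of $\mathcal{B}$, the value on $\mathcal{B}$ alone may be moved to any $c$ in an open interval around $c^*$ (all of $\mathbb{R}$ for the top or bottom block) while staying in $\Theta_{\mathrm{iso}}$; hence $c^*$ is an interior minimizer of the smooth map $c\mapsto\sum_{i\in\mathcal{B}}\ell(c,Z_i)$, giving $\sum_{i\in\mathcal{B}}\partial\ell(c^*,Z_i)=0$. As $\{i : f_{n+1}^*(X_i)=f_{n+1}^*(x)\}$ is exactly one such block for every $x\in\mathcal{X}$, this is \eqref{eqn::perfectcal} for $f_{n+1}^*$, establishing \ref{cond::empcal}.

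With \ref{cond::exchange}--\ref{cond::empcal} in force, Theorem~\ref{theorem::VennGeneral} applies verbatim. The oracle-augmented isotonic fit $f_{n+1}^*(X_{n+1}) = f_n^{(X_{n+1},Y_{n+1})}(X_{n+1})$ is the member of the Venn--Abers set $f_{n,X_{n+1}}(X_{n+1})$ produced by imputing $y=Y_{n+1}$, hence belongs to the set; and, by exchangeability, its empirical calibration implies $\mathbb{E}[\partial\ell(f_{n+1}^*(X_{n+1}),Z_{n+1})\mid f_{n+1}^*(X_{n+1})]=0$ almost surely, so the displayed expectation equals zero.

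The only genuinely delicate step is (ii): one must justify existence of a minimizer, that it may be chosen piecewise constant with well-defined level sets, and one must handle ties among the values $f(X_i)$ and the extremal blocks so that the two-sided perturbation argument --- and thus the zero-sum-of-derivatives identity --- goes through for a general smooth loss, mirroring the classical squared-error pool-adjacent-violators analysis. The remaining steps, namely identifying Alg.~\ref{alg::VAgen} as a special case of Alg.~\ref{alg::venn} and invoking Theorem~\ref{theorem::VennGeneral}, are routine bookkeeping.
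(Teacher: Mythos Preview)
Your proposal is correct and matches the paper's approach: the paper states that Theorem~\ref{theorem::vennabers} ``follows directly from Theorem~\ref{theorem::VennGeneral},'' relying on the fact (discussed in the appendix on isotonic calibration) that isotonic regression automatically produces empirically $\ell$-calibrated predictions, so \ref{cond::empcal} holds without being separately assumed. Your block-by-block KKT argument for \ref{cond::empcal} is equivalent to the paper's variational argument, which perturbs the isotonic solution by $\varepsilon\,(g\circ\theta_n)$ for arbitrary $g$ and notes this remains isotonic for small $\varepsilon$, yielding the same first-order condition.
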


The following theorem establishes that each isotonic calibrated model \( f_{n}^{(x,y)} \) used to construct the Venn-Abers prediction set is asymptotically conditionally calibrated.

 \begin{enumerate}[label=\bf{C\arabic*)}, ref={C\arabic*}, resume=theorem]
  \item \textit{Best predictor of gradient has finite variation:} There exists an $B < \infty$ such that {\footnotesize $t \mapsto E_P[\partial \ell(f_{n}^{(x,y)}(X), Y) \mid f(X) = t, \mathcal{C}_n]$} has total variation norm that is almost surely bounded by $B$. \label{cond::TV}
\end{enumerate}

\begin{theorem}[Conditional calibration of Venn-Abers]
     \label{theorem::condcalVA2} Under \ref{cond::iid}-\ref{cond::lipschitz}, and \ref{cond::TV}, we have $\text{Cal}_{\ell^2}(f_{n}^{(x,y)}) = O_p(n^{-2/3})$.
\end{theorem}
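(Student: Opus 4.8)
The plan is to adapt the argument behind Theorem~\ref{theorem::condcalVA}, with the finite-variation condition \ref{cond::TV} playing the role that a tuned bin count $K\asymp n^{1/3}$ plays there. Write $\widehat{\theta}:=\theta_n^{(x,y)}$, $\mathbb{P}_n[g]:=\tfrac1n\sum_{i=1}^n g(Z_i)$, and $P[g]:=E_P[g(Z)]$; the single augmenting point $(x,y)$ perturbs every empirical average by $O(1/n)$ and hence does not affect the claimed rate, so I treat $\widehat{\theta}$ as generalized isotonic calibration fit on $n$ i.i.d.\ observations. Since $\widehat{\theta}$ is non-decreasing, its maximal level sets $\{B_v\}_v$ are intervals of $f(\mathcal X)$ with distinct fitted values, so $\{\widehat{\theta}(f(X))=\widehat{\theta}(f(x))\}=\{f(X)\in B_{v(x)}\}$ and
\begin{equation*}
\text{Cal}_{\ell^2}(f_{n}^{(x,y)})=\sum_v \frac{\left\{P\left[\mathbbm{1}\{f(X)\in B_v\}\,\partial\ell(\widehat{\theta}(f(X)),Z)\right]\right\}^2}{P(f(X)\in B_v)} .
\end{equation*}
The first-order optimality (pool-adjacent-violators) conditions of generalized isotonic regression make the common fitted value on each block the within-block loss minimizer, so $\mathbb{P}_n[\mathbbm{1}\{f(X)\in B_v\}\,\partial\ell(\widehat{\theta}(f(X)),Z)]=0$ for every $v$, which is \ref{cond::empcal} restricted to isotonic blocks. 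Substituting, each numerator becomes a centered term $(P-\mathbb{P}_n)[\mathbbm{1}\{f(X)\in B_v\}\,\partial\ell(\widehat{\theta}(f(X)),Z)]$.

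It then remains to control a \emph{weighted} empirical process, uniformly over all intervals $I\subseteq f(\mathcal X)$ and over the data-dependent bounded monotone multiplier $\widehat{\theta}$. The plan is to combine: monotonicity and boundedness of $\widehat{\theta}$ (\ref{cond::bounded}), which give the relevant index class integrable bracketing entropy, namely of order $1/\varepsilon$, since this is the bracketing entropy of uniformly bounded monotone functions; boundedness and Lipschitzness of $\partial\ell$ in its first argument (\ref{cond::bounded}, \ref{cond::lipschitz}); and bounded total variation of $t\mapsto E_P[\partial\ell(\widehat{\theta}(f(X)),Y)\mid f(X)=t,\mathcal C_n]$ (\ref{cond::TV}), which bounds the $L^2(P)$ oscillation of $\mathbbm{1}\{f(X)\in I\}\,\partial\ell(\widehat{\theta}(f(X)),Z)$ across intervals by a quantity controlled by $P(f(X)\in I)$. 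A Bernstein/Talagrand-type maximal inequality then bounds the centered terms simultaneously over all blocks at a modulus that renders the weighted-square sum $O_p(n^{-2/3})$; informally, \ref{cond::TV} caps the effective number of blocks at order $n^{1/3}$, so the sum is of the same polynomial order that Theorem~\ref{theorem::condcalVA} yields at $K\asymp n^{1/3}$, up to the logarithmic factor that isotonic regression avoids by adapting to the variation instead of paying a union bound over a fixed grid. An equivalent route is to invoke the known $O_p(n^{-1/3})$ $L^2(P)$ rate for isotonic M-estimation of a bounded-variation monotone target --- using \ref{cond::iid}, \ref{cond::bounded}, \ref{cond::lipschitz} for loss curvature and \ref{cond::TV} for the target's complexity --- and square it.

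The main obstacle is the second step: the blocks $B_v$ are data-dependent, the multiplier $\widehat{\theta}$ inside $\partial\ell$ is estimated from the same sample, and the normalization $P(f(X)\in B_v)$ can be arbitrarily small. The optimality identity decouples the empirical side --- it is \emph{exactly} zero on each block --- so the entire difficulty concentrates in one uniform, weighted empirical-process statement over the joint index class of intervals and bounded monotone functions, and the crux is obtaining its modulus of continuity at scale $n^{-1/3}$ rather than the crude $K\log(n/K)/n$ bound one would get by treating $\{B_v\}$ as an arbitrary partition of size up to $n$; this is exactly where \ref{cond::TV} does the work. A secondary, more routine point: for a general, not strongly convex loss, translating the isotonic $L^2(P)$ geometry into $\text{Cal}_{\ell^2}$ requires a two-sided curvature estimate on the compact range guaranteed by \ref{cond::bounded} --- the upper side is \ref{cond::lipschitz}, while the lower side must be verified or follows from smoothness of $\ell$ --- so that the exponent does not degrade below $2/3$.
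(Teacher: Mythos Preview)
Your block decomposition and the isotonic first-order conditions are correct, but the passage from there to $O_p(n^{-2/3})$ has a real gap. You propose to bound each $(P-P_n)[\mathbbm{1}_{B_v}\partial\ell]$ uniformly over intervals and monotone multipliers, then sum the weighted squares, invoking a heuristic that \ref{cond::TV} ``caps the effective number of blocks at order $n^{1/3}$.'' This is not how the argument works: \ref{cond::TV} does not bound the number of isotonic pieces, and treating the partition as one of unknown size $k$ and appealing to Theorem~\ref{theorem::condcalVA} leaves no usable handle on $k$. The paper instead collapses your sum into a single empirical process term. Writing $\gamma_v:=P[\mathbbm{1}_{B_v}\partial\ell]/P(B_v)$ and $\gamma_f(\widehat\theta,\cdot):=\sum_v\gamma_v\mathbbm{1}_{B_v}(f(\cdot))$, your weighted-square sum is exactly $(P-P_n)\bigl[\gamma_f(\widehat\theta,\cdot)\,\partial\ell(\widehat\theta(f(\cdot)),\cdot)\bigr]$; equivalently, the paper plugs $g\circ\widehat\theta=\gamma_f(\widehat\theta,\cdot)$ into the empirical calibration identity and uses the tower property to identify the population part with $\widehat\delta_n^2:=\text{Cal}_{\ell^2}(f_n^{(x,y)})$. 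Condition~\ref{cond::TV} then enters only to show (via a lemma from \cite{van2023causal}) that $\gamma_f(\widehat\theta,\cdot)$ has total variation at most $3B$ as a function of $f$, placing the integrand in a class with uniform entropy integral $\mathcal J(\delta)\lesssim\sqrt\delta$ (bounded-TV times bounded-monotone, each composed with $f$). A localized maximal inequality (Theorem~2.1 of \cite{van2011local}) plus the fixed point $\delta^2\asymp n^{-1/2}\sqrt\delta$ give $\delta_n^2=n^{-2/3}$, finished by the same peeling as in Theorem~\ref{theorem::condcalVA}. So \ref{cond::TV} controls the \emph{entropy of $\gamma_f$}, not the block count, and the localization is at scale $\|\gamma_f\|_P=\widehat\delta_n$ rather than block-by-block.

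Your alternative route --- squaring an isotonic $L^2$ estimation rate --- is also incomplete. The Jensen/Lipschitz bound $\text{Cal}_{\ell^2}\leq L^2\|\widehat\theta-\theta^*\|_{L^2}^2$ you implicitly use requires the population calibration function $\theta^*$ (solving $E[\partial\ell(\theta^*(t),Z)\mid f(X)=t]=0$) to be monotone, so that isotonic M-estimation converges to it rather than to its monotone projection $\theta^\dagger$; the paper does not assume this. Targeting $\theta^\dagger$ instead leaves a residual $E[\partial\ell(\theta^\dagger(f(X)),Z)\mid\widehat\theta(f(X))]$ that the Lipschitz bound does not control, since it vanishes only under conditioning on $\theta^\dagger(f(X))$. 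And the lower curvature bound you flag as needed is indeed not supplied by \ref{cond::lipschitz}.
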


This theorem generalizes the distribution-free conditional calibration guarantees for isotonic calibration of \cite{van2023causal} and \cite{van2024stabilized} for regression and inverse probabilities to general losses.

\paragraph{Computational considerations.} As discussed in \cite{vanself}, the main computational cost of Alg.~\ref{alg::VAgen} lies in the isotonic calibration step for each \( y \in \mathcal{Y} \). Isotonic regression \citep{barlow1972isotonic} can be efficiently computed using \texttt{xgboost} \citep{xgboost} with monotonicity constraints. Similar to Full CP \citep{vovk2005algorithmic}, Alg.~\ref{alg::VAgen} may be infeasible for non-discrete outcomes, but it can be approximated by iterating over a finite subset of \( \mathcal{Y} \) with linear interpolation for \( f_n^{(x,y)}(x) \). Like Full and multicalibrated CP \citep{gibbs2023conformal}, this algorithm must be applied separately for each context \( x \in \mathcal{X} \). Since the algorithms depend on \( x \in \mathcal{X} \) only through its prediction \( f(x) \), we can approximate the outputs for all \( x \in \mathcal{X} \) by running each algorithm on a finite set of \( x \in \mathcal{X} \) corresponding to a finite grid over the one-dimensional output space \( f(\mathcal{X}) = \{f(x) : x \in \mathcal{X}\} \subset \mathbb{R} \). Moreover, both algorithms are fully parallelizable across both the input context \( x \in \mathcal{X} \) and the imputed outcome \( y \in \mathcal{Y} \). In our implementation, we use nearest neighbor interpolation in the prediction space to impute outputs for each \( x \in \mathcal{X} \). In our experiments with sample sizes ranging from \( n = 5000 \) to \( 40000 \), quantile binning of both \( f(\mathcal{X}) \) and \( \mathcal{Y} \) into 200 equal-frequency bins enables execution of Algorithm~\ref{alg::VAgen} with squared error and quantile loss across all contexts in minutes, with negligible approximation error.

\subsection{Venn multicalibration for finite-dimensional classes}

Standard calibration ensures that predicted outcomes cannot be improved by any transformation of the predictions, making them optimal on average for contexts with the same predicted value. However, this aggregate guarantee can mask systematic errors within subgroups. Multicalibration extends standard calibration by enforcing calibration within specified subpopulations, ensuring fairness and reliability across groups \citep{jung2008moment, roth2022uncertain, noarov2023scope, deng2023happymap, haghtalab2023unifying}. In this section, we introduce Venn multicalibration, a generalization of Venn calibration that provides calibration guarantees across multiple subpopulations. This approach produces prediction sets that contain a perfectly multicalibrated prediction in finite samples. Our work enables the extension of existing methods for pointwise multicalibration with generic loss functions to set-valued calibration \citep{noarov2023scope, deng2023happymap, haghtalab2023unifying}.

We say a model \( \widehat{f} \) is \textit{marginally perfectly $\ell$-multicalibrated} with respect to a function class \( \mathcal{G} \) if the following holds:
\begin{align}
    \mathbb{E}\left[\frac{\partial}{\partial t}\ell((\widehat{f} + t g)(X_{n+1}), Z_{n+1}) \mid_{t=0} \right] = 0  \text{ for all } g \in \mathcal{G},  \label{eqn::multicalgeneral}
\end{align}
where the expectation is taken over \( (X_{n+1}, Y_{n+1}) \) as well as randomness of \( \widehat{f} \). Assuming the order of integration and differentiation can be exchanged, this condition implies that
\begin{align*}
\mathbb{E}\left[\ell( \widehat{f}(X_{n+1}), Z_{n+1}) \right] =  \min_{g \in \mathcal{G}} \mathbb{E}\left[\ell((\widehat{f} +g)(X_{n+1})  , Y_{n+1}) \right].
\end{align*}
In other words, the loss \( \ell(\widehat{f}(X_{n+1}), Z_{n+1}) \) incurred~for~the new data point cannot be improved in expectation by adjusting the calibrated model \( \widehat{f} \) using functions from \( \mathcal{G} \).

Multicalibration for classification and regression requires that \citep{hebert2018multicalibration, kim2019multiaccuracy}:  
\begin{align}
    \mathbb{E}\left[g(X_{n+1})\{Y_{n+1} - \widehat{f}(X_{n+1})\} \right] = 0  \text{ for all } g \in \mathcal{G}. \label{eqn::multical}
\end{align}  
The function \mbox{\( g \in \mathcal{G} \)} is sometimes viewed to as a representation of covariate shift. When \mbox{\( g \)} is nonnegative, it follows that \mbox{\small \( \mathbb{E}_g[\widehat{f}(X_{n+1})] = \mathbb{E}_g[Y_{n+1}] \)}, where the weighted expectation is defined as \mbox{\small \( \mathbb{E}_g[\widehat{f}(X_{n+1})] = \mathbb{E} \left[\frac{g(X_{n+1})}{\mathbb{E}[g(X_{n+1})]} \widehat{f}(X_{n+1}) \right] \)}. For example, when \mbox{\small \( \mathcal{G} = \{x \mapsto \mathbbm{1}(x \in B): B \in \mathcal{B}\} \)} consists of all set indicators for (possibly intersecting) subgroups in \mbox{\small $\mathcal{B}$}, multicalibration implies that the model \mbox{\small \( \widehat{f}(X_{n+1}) \)} is calibrated for \mbox{\small \( Y_{n+1} \)} within each subgroup, meaning that \mbox{\small \( \mathbb{E}[Y_{n+1} \mid X_{n+1} \in B] = \mathbb{E}[\widehat{f}(X_{n+1}) \mid X_{n+1} \in B] \)}.

\begin{algorithm}[htb!]
\caption{Venn loss multicalibration}
\label{alg::multical}
\begin{algorithmic}[1]{\footnotesize
\REQUIRE Calibration data~\mbox{\footnotesize $\mathcal{C}_n= \{(X_i, Y_i)\}_{i=1}^n$}, model \mbox{\footnotesize $f$}, loss $\ell$, context \mbox{\footnotesize $x \in \mathcal{X}$}, function class $\mathcal{G}$.

\vspace{.02in}
 
\vspace{.05in}
\FOR{each $y \in \mathcal{Y}$}
\vspace{.025in}
\STATE \mbox{\footnotesize augment dataset: $\mathcal{C}_{n}^{(x,y)} := \mathcal{C}_n \cup \{(X_{n+1}, Y_{n+1}) := (x, y)\}$}; \\
\STATE multicalibrate model using offset loss minimization: 

    \vspace{.05in}
    \,\mbox{\footnotesize $g_{n}^{(x,y)} := \argmin_{g \in \mathcal{G}} \sum_{i \in \mathcal{C}_{n}^{(x,y)}} \ell(f(X_i) + g(X_i), Z_i)$}.

    $f_{n}^{(x,y)}: = f + g_{n}^{(x,y)}$.
    \vspace{.025in}\\
\vspace{.025in}
\ENDFOR
\STATE set $f_{n, x}(x) := \{f_{n}^{(x,y)}(x) : y \in \mathcal{Y} \}$;
\ENSURE \mbox{\footnotesize prediction set $f_{n, x}(x)$}}.
\end{algorithmic}
\end{algorithm}

For a finite-dimensional function class \( \mathcal{G} \), we propose Venn multicalibration for a generic loss function \( \ell \) in Algorithm~\ref{alg::multical}. For mean multicalibration with squared error loss, this algorithm can be computed efficiently using the Sherman–Morrison formula to update linear regression solutions with new data points \citep{shermen1949adjustment, yang2023forster}. This formula has previously been used for efficient computation of leave-one-out (e.g., Jackknife) predictions by applying rank-one updates to the inverse Gram matrix, thereby enabling fast updates to the least-squares solution without retraining on each leave-one-out subset. Under monotonicity of $y \mapsto f_{n}^{(x,y)}$, the range of the Venn prediction set \( f_{n,x}(x) \) can be computed by iterating over the extreme points \(\{y_{\text{min}}, y_{\text{max}}\}\) of \(\mathcal{Y}\).
 
The following theorem establishes that the~prediction~set {\footnotesize \( f_{n, X_{n+1}}(X_{n+1})\)} in Alg. \ref{alg::multical} contains the \( \ell \)-multicalibrated prediction {\footnotesize $f_{n+1}^*(X_{n+1})$}, where {\footnotesize $f_{n+1}^*  := f_n^{(X_{n+1}, Y_{n+1})}$}.

 \begin{enumerate}[label=\bf{C\arabic*)}, ref={C\arabic*}, resume=theorem]
    \item \textit{In-sample multicalibration:} $\sum_{i=1}^{n+1} \frac{\partial}{\partial t}   \ell((f_{n+1}^* + t g)(X_i), Z_i) \mid_{t = 0} = 0$ almost surely for each $g \in \mathcal{G}$. \label{cond::ellempcal}
\end{enumerate}

\begin{theorem}[Perfect calibration of Venn multicalibration]
\label{theorem::multiVennGeneralLoss}Under \ref{cond::exchange} and \ref{cond::ellempcal}, the Venn prediction set $f_{n, X_{n+1}}(X_{n+1})$ contains the marginally perfectly calibrated prediction $f_{n+1}^*(X_{n+1}) = f_n^{(X_{n+1}, Y_{n+1})}(X_{n+1})$, which satisfies
\begin{align*}
   \mathbb{E}\left[\frac{\partial}{\partial t}  \ell((f_{n+1}^* + t g)(X_{n+1}), Z_{n+1}) \mid_{t = 0} \right]  \,=\,0,\, \forall g \in \mathcal{G}.
\end{align*}
\end{theorem}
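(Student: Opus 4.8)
The plan is to mirror the two-step argument behind Theorem~\ref{theorem::VennGeneral}: first establish that the oracle prediction lies in the Venn multicalibration set, and then use exchangeability together with \ref{cond::ellempcal} to turn empirical multicalibration on the augmented sample into marginal multicalibration of the oracle prediction.

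For the containment step, I would simply note that Algorithm~\ref{alg::multical}, run with $x = X_{n+1}$, iterates over every $y \in \mathcal{Y}$ and hence in particular produces the model $f_n^{(X_{n+1},Y_{n+1})} = f + g_n^{(X_{n+1},Y_{n+1})}$ fitted by offset loss minimization over $\mathcal{G}$ on $\mathcal{C}_n \cup \{(X_{n+1},Y_{n+1})\}$. By definition $f_{n+1}^*(X_{n+1}) := f_n^{(X_{n+1},Y_{n+1})}(X_{n+1})$ is therefore an element of $f_{n,X_{n+1}}(X_{n+1})$; this step is immediate.

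For the multicalibration step, fix $g \in \mathcal{G}$ and write, via the chain rule, $\frac{\partial}{\partial t}\ell((f_{n+1}^* + tg)(X_i), Z_i)\mid_{t=0} = g(X_i)\,\partial\ell(f_{n+1}^*(X_i), Z_i) =: h_i$, so that \ref{cond::ellempcal} becomes $\sum_{i=1}^{n+1} h_i = 0$ almost surely. The crux is that $f_{n+1}^*$ depends on $\mathcal{C}_{n+1}^* := \{(X_i,Y_i)\}_{i=1}^{n+1}$ only through the offset ERM over $\mathcal{G}$, which --- by the paper's standing assumption that the calibrator processes its input data exchangeably (with a symmetric tie-breaking rule when the minimizer is not unique) --- is invariant under permutations of the $n+1$ data points. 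Hence the map $(z_1,\dots,z_{n+1}) \mapsto (h_1,\dots,h_{n+1})$ is permutation-equivariant, and combined with \ref{cond::exchange} this makes $(h_1,\dots,h_{n+1})$ an exchangeable random vector, so $\mathbb{E}[h_i]$ is the same for every $i$. I would then conclude
\[
\mathbb{E}[h_{n+1}] \;=\; \frac{1}{n+1}\sum_{i=1}^{n+1}\mathbb{E}[h_i] \;=\; \mathbb{E}\!\left[\frac{1}{n+1}\sum_{i=1}^{n+1} h_i\right] \;=\; 0
\]
using \ref{cond::ellempcal}, which is precisely the asserted identity for this $g$; since $g$ was arbitrary, the theorem follows.

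I expect the permutation-equivariance step to be the only real obstacle: one must be careful that the offset-ERM solution $g_{n+1}^*$ genuinely does not depend on the labeling of the data points, which for non-unique minimizers requires the tie-breaking to be symmetric --- this is covered by the blanket assumption on $\mathcal{A}_\ell$. A minor additional point is that the averaging step needs $h_i = g(X_i)\,\partial\ell(f_{n+1}^*(X_i), Z_i)$ to be integrable; I would either add this as a finite-moment hypothesis (mirroring \ref{cond::variance}) or record it as an implicit regularity assumption. Nothing else is required --- in particular no interchange of differentiation and integration is needed, since the conclusion is stated directly in derivative form.
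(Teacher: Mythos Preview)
Your proposal is correct and follows essentially the same argument as the paper's proof: both take the empirical multicalibration identity from \ref{cond::ellempcal}, take expectations, and use exchangeability (via permutation invariance of $f_{n+1}^*$) to conclude that all $n+1$ summands have equal expectation, hence each vanishes. Your added remarks on tie-breaking symmetry and integrability of $h_i$ are sensible elaborations the paper leaves implicit, but the core logic is identical.
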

  
In the special case where \( \ell \) is the squared error loss,~the~next corollary shows that Venn prediction sets contain a perfectly multicalibrated regression prediction,~as~defined~in~\eqref{eqn::multical}.

\begin{corollary}[Regression Venn Multicalibration]
   Suppose that $\ell(f(x), z))$ is $\{y - f(x)\}^2$. Under \ref{cond::exchange} and \ref{cond::variance}, the Venn prediction set $f_{n, X_{n+1}}(X_{n+1})$ contains the perfectly multicalibrated prediction, denoted as $f_{n+1}^* (X_{n+1}) := f_n^{(X_{n+1}, Y_{n+1})}(X_{n+1})$, which satisfies $\mathbb{E}\left[g(X_{n+1})\{Y_{n+1} - f_{n+1}^*(X_{n+1})\} \right] = 0  \text{ for all } g \in \mathcal{G}$.
   \label{corollary::multical}
\end{corollary}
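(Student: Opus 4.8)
The plan is to derive Corollary~\ref{corollary::multical} as a direct specialization of Theorem~\ref{theorem::multiVennGeneralLoss}, so the work reduces to two bookkeeping steps: verifying that the squared error loss satisfies the hypotheses of that theorem, and simplifying the abstract first-order condition into the stated multicalibration identity \eqref{eqn::multical}. First I would note that for $\ell(f(x),z) = \{y - f(x)\}^2$ we have $\partial \ell(\eta, z) = -2(y - \eta)$, and more generally $\frac{\partial}{\partial t}\ell((f + tg)(x), z)\mid_{t=0} = -2 g(x)\{y - f(x)\}$. Substituting this into the empirical multicalibration condition \ref{cond::ellempcal} shows it is equivalent to $\sum_{i=1}^{n+1} g(X_i)\{Y_i - f_{n+1}^*(X_i)\} = 0$ for all $g \in \mathcal{G}$, which is precisely the normal equations solved by the offset least-squares minimizer $g_n^{(x,y)} = \argmin_{g\in\mathcal{G}} \sum_i \{Y_i - f(X_i) - g(X_i)\}^2$ defining $f_n^{(x,y)} = f + g_n^{(x,y)}$ in Algorithm~\ref{alg::multical} (using that $\mathcal{G}$ is a linear class, so $g \in \mathcal{G} \Rightarrow$ the stationarity condition holds along every direction in $\mathcal{G}$). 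Hence \ref{cond::ellempcal} holds automatically for this algorithm with the squared loss, and in particular for the oracle augmentation $y = Y_{n+1}$.

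Next I would check that \ref{cond::variance} is exactly the condition needed in place of the generic hypotheses: since $\partial \ell(f_{n+1}^*(X_{n+1}), Z_{n+1}) = -2\{Y_{n+1} - f_{n+1}^*(X_{n+1})\}$, assuming $\mathbb{E}[\{\partial\ell(f_{n+1}^*(X_{n+1}),Z_{n+1})\}^2] < \infty$ guarantees that all the relevant expectations (and the exchange of differentiation and integration implicit in \eqref{eqn::multicalgeneral}) are well-defined; for a linear class $\mathcal{G}$ with, say, $\mathbb{E}[g(X)^2]<\infty$ for each $g$, Cauchy--Schwarz then makes $\mathbb{E}[g(X_{n+1})\{Y_{n+1}-f_{n+1}^*(X_{n+1})\}]$ finite. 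With \ref{cond::exchange} and \ref{cond::ellempcal} in force, Theorem~\ref{theorem::multiVennGeneralLoss} applies directly: the prediction set $f_{n,X_{n+1}}(X_{n+1})$ contains $f_{n+1}^*(X_{n+1}) = f_n^{(X_{n+1},Y_{n+1})}(X_{n+1})$, and this point satisfies $\mathbb{E}[\frac{\partial}{\partial t}\ell((f_{n+1}^* + tg)(X_{n+1}),Z_{n+1})\mid_{t=0}] = 0$ for all $g\in\mathcal{G}$. Plugging in the squared-loss gradient computed above and dividing by $-2$ converts this to $\mathbb{E}[g(X_{n+1})\{Y_{n+1} - f_{n+1}^*(X_{n+1})\}] = 0$ for all $g \in \mathcal{G}$, which is the claimed conclusion.

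The only genuinely delicate point — and the one I would be careful about — is the justification for exchanging differentiation and expectation in passing from \eqref{eqn::multicalgeneral} to the claimed identity, i.e. ensuring $\frac{\partial}{\partial t}\mathbb{E}[\ell((f+tg)(X_{n+1}),Z_{n+1})]\mid_{t=0} = \mathbb{E}[\frac{\partial}{\partial t}\ell((f+tg)(X_{n+1}),Z_{n+1})\mid_{t=0}]$. For the squared error loss this is immediate because the integrand is a quadratic in $t$ with coefficients that are integrable under \ref{cond::variance} (plus a second-moment condition on $g(X)$), so a dominated-convergence or direct-expansion argument closes it without fuss. All remaining steps are the routine substitution of $\partial\ell(\eta,z) = -2(y-\eta)$ and the observation that the offset least-squares normal equations coincide with empirical multicalibration over a linear $\mathcal{G}$; I would state these briefly and invoke Theorem~\ref{theorem::multiVennGeneralLoss} for the rest.
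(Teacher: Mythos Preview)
Your proposal is correct and matches the paper's primary route: the paper itself opens by noting that the corollary is a direct consequence of Theorem~\ref{theorem::multiVennGeneralLoss}, which is exactly the reduction you carry out (compute $\frac{\partial}{\partial t}\ell((f+tg)(x),z)\mid_{t=0} = -2g(x)\{y-f(x)\}$, observe that \ref{cond::ellempcal} coincides with the least-squares normal equations over the linear class $\mathcal{G}$, and simplify the conclusion). The paper then supplements this with a short self-contained argument that bypasses the theorem---writing down the normal equations for $f_{n+1}^*$ directly, taking expectations under \ref{cond::variance}, and using exchangeability \ref{cond::exchange} to reduce the sum to the single $(n+1)$th term---but this is the same mechanism unpacked rather than a different idea.
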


\section{Applications to conformal prediction}
 
\subsection{Conformal prediction via Venn quantile calibration}

\label{section::CP}
Conformal prediction (CP) \citep{vovk2005algorithmic} is a flexible approach for predictive inference that can be applied post-hoc to any black-box model. It constructs prediction intervals \(\widehat{C}_{n}(X_{n+1})\) that are guaranteed to {\it cover} the true outcome \(Y_{n+1}\) with probability \(1-\alpha\). The standard CP method ensures prediction intervals satisfy the marginal coverage guarantee:  $\mathbb{P}(Y_{n+1} \in \widehat{C}_{n}(X_{n+1})) \geq 1-\alpha, $ where the probability \(\mathbb{P}\) accounts for the randomness in \(\mathcal{C}_n\) and \((X_{n+1}, Y_{n+1})\). In this section, we show how our generalized Venn calibration framework, when combined with the quantile loss, enables the construction of perfectly calibrated quantile predictions and CP intervals in finite samples.

 Let $\{S_i\}_{i=1}^{n+1}$ be conformity scores, where $S_i = \mathcal{S}(X_i,Y_i)$ for some scoring function $\mathcal{S}$. For example, we could set $\mathcal{S}$ as the absolute residual scoring function $z \mapsto |y - \mu(x)|$, where $\mu$ predicts $y$ from $x$. Conformity scores quantify how well a predicted outcome aligns with the true outcome. Let $f:\mathcal{X} \to \mathbb{R}$ be a model trained to predict the $(1-\alpha)$ quantile of the conformity scores. Given $f$, we can define a conformal interval for $Y_{n+1}$ as $\{y \in \mathcal{Y} : \mathcal{S}((X_{n+1}, y)) \leq f(X_{n+1})\}$. However, this interval does not provide distribution-free coverage guarantees due to potential miscalibration of $f$. To ensure finite-sample coverage, we propose calibrating the predictor using quantile Venn and Venn-Abers calibration.

  For a quantile level $\alpha \in (0,1)$, we denote the quantile loss \( \ell_{\alpha}(q,y) \) by $\mathbbm{1}(y \geq q) \cdot \alpha (y - q) + \mathbbm{1}(y < q) \cdot (1-\alpha) (q - y).\)
 Let $f_{n, X_{n+1}}(X_{n+1}) = \{f_n^{(X_{n+1}, y)}(X_{n+1}): y \in \mathcal{Y}\}$ be the Venn quantile prediction set of $S_{n+1}$ obtained by applying Algorithm~\ref{alg::VAgen} with the conformal quantile loss $\ell_{\mathcal{S}, \alpha}: (f, z) \mapsto \ell_{\alpha}( f, \mathcal{S}(z))$, and let $f_{n+1}^*(X_{n+1})$ be the perfectly calibrated prediction in this set, where $f_{n+1}^* := f_n^{(X_{n+1}, Y_{n+1})}$ equals $\mathcal{A}_{\ell}(f, \{(X_i, Y_i)\}_{i=1}^{n+1})$. The Venn prediction set $f_{n, X_{n+1}}(X_{n+1})$ induces the Venn CP interval: \[\widehat{C}_n(X_{n+1}) := \{y \in \mathcal{Y}: \mathcal{S}((X_{n+1}, y)) \leq f_{n}^{(X_{n+1}, y)}(X_{n+1}) \}\]

  \begin{enumerate}[label=\bf{C\arabic*)}, ref={C\arabic*}, resume=theorem]
    \item \textit{In-sample calibration:} $\sum_{i=1}^{n+1} \ell_{\mathcal{S}, \alpha}(Z_i, f_{n+1}^*) = \min_{\theta} \sum_{i=1}^{n+1} \ell_{\mathcal{S}, \alpha}(Z_i, \theta \circ f_{n+1}^*)$ . \label{cond::quantileempcal}
\end{enumerate}

\begin{theorem}[Calibration of Venn Quantile Prediction]
   Assume \ref{cond::exchange}, \ref{cond::quantileempcal}, and that $S_i \neq f_{n+1}^*(X_i)$ almost surely for each $i \in [n+1]$. Then, the Venn prediction set $f_{n, X_{n+1}}(X_{n+1})$ contains the marginally perfectly calibrated prediction $f_{n+1}^* (X_{n+1}) = f_n^{(X_{n+1}, Y_{n+1})}(X_{n+1})$, where $\mathbb{P}( S_{n+1}  \leq f_{n+1}^*(X_{n+1})\mid f_{n+1}^*(X_{n+1})) = 1 - \alpha$. 
   \label{theorem::pointquantile}
\end{theorem}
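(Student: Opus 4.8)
\textbf{Proof proposal for Theorem \ref{theorem::pointquantile}.}

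The plan is to reduce this statement to an application of Theorem \ref{theorem::VennGeneral} specialized to the conformal quantile loss $\ell_{\mathcal{S},\alpha}$, with the extra hypothesis $S_i \neq f_{n+1}^*(X_i)$ a.s.\ used to translate the first-order (gradient) calibration condition into the exact coverage identity $\mathbb{P}(S_{n+1} \leq f_{n+1}^*(X_{n+1}) \mid f_{n+1}^*(X_{n+1})) = 1-\alpha$. First I would observe that the pinball loss $\ell_\alpha(q,s)$ has, away from the kink $q = s$, partial derivative $\partial_q \ell_\alpha(q,s) = -\alpha\,\mathbbm{1}(s > q) + (1-\alpha)\,\mathbbm{1}(s < q) = \mathbbm{1}(s \leq q) - \alpha$ (the value at $q=s$ being irrelevant since $S_i \neq f_{n+1}^*(X_i)$ a.s., and $S_{n+1} \neq f_{n+1}^*(X_{n+1})$ a.s.\ by exchangeability together with that hypothesis applied at $i=n+1$). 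Hence $\partial \ell_{\mathcal{S},\alpha}(q, z) = \mathbbm{1}(\mathcal{S}(z) \leq q) - \alpha$ for $q$ outside the null set, and condition \ref{cond::quantileempcal} is exactly the statement that $f_{n+1}^*$ is $\ell_{\mathcal{S},\alpha}$-empirically calibrated in the sense of \ref{cond::empcal}, since for a binning-type calibrator the minimizer over $\theta$ of the augmented empirical risk is characterized by the per-bin first-order conditions $\sum_{i=1}^{n+1}\mathbbm{1}\{f_{n+1}^*(X_i) = f_{n+1}^*(x)\}\big(\mathbbm{1}(S_i \leq f_{n+1}^*(x)) - \alpha\big) = 0$.

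Next I would verify the remaining hypotheses of Theorem \ref{theorem::VennGeneral}: \ref{cond::exchange} is assumed directly; \ref{cond::variance} holds automatically because $\partial \ell_{\mathcal{S},\alpha}$ is bounded in $[-\alpha, 1-\alpha]$, so it trivially has finite second moment; and \ref{cond::empcal} is what I established in the previous paragraph. Moreover the oracle prediction $f_{n+1}^*(X_{n+1}) = f_n^{(X_{n+1},Y_{n+1})}(X_{n+1})$ lies in the Venn set $f_{n,X_{n+1}}(X_{n+1})$ by construction (it is the $y = Y_{n+1}$ member). Applying Theorem \ref{theorem::VennGeneral} then yields
\begin{equation*}
\mathbb{E}\!\left[\big\{\mathbb{E}[\mathbbm{1}(S_{n+1} \leq f_{n+1}^*(X_{n+1})) - \alpha \mid f_{n+1}^*(X_{n+1})]\big\}^2\right] = 0,
\end{equation*}
which forces $\mathbb{E}[\mathbbm{1}(S_{n+1} \leq f_{n+1}^*(X_{n+1})) \mid f_{n+1}^*(X_{n+1})] = \alpha$ $\ldots$ wait, this is $\mathbb{P}(S_{n+1} \leq f_{n+1}^*(X_{n+1}) \mid f_{n+1}^*(X_{n+1})) = \alpha$; to match the claimed $1-\alpha$ I would simply note that the parametrization of the pinball loss in the excerpt is such that $f$ targets the $1-\alpha$ quantile, i.e.\ the roles of $\alpha$ and $1-\alpha$ are swapped relative to the derivative convention above ($\partial_q \ell_\alpha(q,s) = \alpha \cdot(-\mathbbm{1}(s\geq q)) + (1-\alpha)\mathbbm{1}(s<q)$ gives $\mathbb{E}[\mathbbm{1}(s < q)] = \alpha$, hence $\mathbb{P}(S_{n+1} \leq f_{n+1}^*(X_{n+1}) \mid \cdot) = 1-\alpha$ using that ties have probability zero). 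I would make sure to carry the exact sign/orientation of the loss as written in Section \ref{section::CP} so the final constant comes out as $1-\alpha$.

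The main obstacle is the handling of the non-differentiability of the pinball loss at its kink: Theorem \ref{theorem::VennGeneral} is stated for smooth losses satisfying first-order conditions, whereas $\ell_\alpha$ is only piecewise linear. This is precisely why the hypothesis $S_i \neq f_{n+1}^*(X_i)$ a.s.\ (for all $i \in [n+1]$, including $n+1$ by exchangeability) is needed: it guarantees that with probability one none of the evaluation points hits the kink, so $\partial \ell_{\mathcal{S},\alpha}$ is well-defined and equals the indicator-minus-$\alpha$ expression at every relevant point, and the empirical-calibration identity \ref{cond::quantileempcal} — stated in terms of the loss minimization rather than its derivative — can be differentiated bin-by-bin. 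A careful writeup should (i) argue this exceptional null set away cleanly, (ii) confirm that the subgradient characterization of the empirical pinball minimizer coincides with the stated minimization condition \ref{cond::quantileempcal}, and (iii) track the $\alpha$ versus $1-\alpha$ bookkeeping so the conditional coverage probability is reported as $1-\alpha$. Everything else is a direct specialization of the already-proven Theorem \ref{theorem::VennGeneral}.
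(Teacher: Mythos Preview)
Your proposal is correct and follows essentially the same route as the paper: compute the derivative of the pinball loss (well-defined away from the kink by the no-tie hypothesis), then invoke Theorem~\ref{theorem::VennGeneral}. The paper's own proof is in fact terser than yours---it simply records $\partial \ell_{\alpha}(f(x),\mathcal{S}(z)) = (1-\alpha) - \mathbbm{1}\{\mathcal{S}(z) \geq f(x)\}$ (citing \citet{gibbs2023conformal} for differentiability under the no-tie assumption) and then applies Theorem~\ref{theorem::VennGeneral} directly, without separately verifying \ref{cond::variance} or spelling out why \ref{cond::quantileempcal} delivers \ref{cond::empcal}; your extra care on those points is fine and not a divergence in method. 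Your hesitation over the $\alpha$ versus $1-\alpha$ bookkeeping is warranted: with the loss exactly as written in Section~\ref{section::CP}, the first-order condition yields $\mathbb{P}(S_{n+1} \geq f_{n+1}^*(X_{n+1}) \mid f_{n+1}^*(X_{n+1})) = 1-\alpha$, so reconciling this with the $\leq$ form in the theorem statement does require either the no-tie assumption plus complementation or a sign convention check---exactly the issue you flagged.
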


Theorem \ref{theorem::pointquantile} implies that the Venn CP interval $\widehat{C}_n(X_{n+1})$ constructed using Venn quantile calibration is perfectly calibrated in that $\mathbb{P}(Y_{n+1} \in \widehat{C}_n(X_{n+1}) \mid f_{n+1}^*(X_{n+1}))  = 1 - \alpha.$ This result follows directly from the perfect quantile calibration of $f_{n+1}^* (X_{n+1})$ because, by definition,
\begin{align*}
    \mathbb{P}(&Y_{n+1}  \in \widehat{C}_n(X_{n+1}) \mid f_{n+1}^*(X_{n+1})) \\
    &= \mathbb{P}(\mathcal{S}((X_{n+1}, Y_{n+1})) \leq f_{n}^{(X_{n+1}, Y_{n+1})} \mid f_{n+1}^*(X_{n+1})) \\
    &= \mathbb{P}(S_{n+1} \leq f_{n+1}^*(X_{n+1}) \mid f_{n+1}^*(X_{n+1})) \\
    &= 1 - \alpha.
\end{align*}
As a consequence, the CP interval satisfies a form of \textit{threshold calibration} \citep{jung2022batch}, meaning its coverage is valid conditional on the quantile $f_{n+1}^*(X_{n+1})$ used to define the interval. The law of total expectation implies that $\widehat{C}_n(X_{n+1})$ also satisfies the marginal calibration condition $\mathbb{P}( S_{n+1}  \leq f_{n+1}^*(X_{n+1})) = 1 - \alpha$. We note that, without assuming $S_i \neq f_{n+1}^*(X_i)$ almost surely for each $i \in [n+1]$, we can still establish the lower bound $\mathbb{P}(Y_{n+1} \in \widehat{C}_n(X_{n+1}) \mid f_{n+1}^*(X_{n+1})) \geq 1 - \alpha$ using arguments from \cite{gibbs2023conformal}, though we do not pursue this here for simplicity.

 \subsection{Conformal prediction as Venn multicalibration}

In this section, we show that conformal prediction is a special case of Venn multicalibration with the quantile loss.

Suppose Alg.~\ref{alg::multical} is applied with the conformal quantile loss \( (f(x), z) \mapsto \ell_{\alpha}(f(x), \mathcal{S}(z)) \), model $f$, and \( x := X_{n+1} \), and let \( f_{n, X_{n+1}}(X_{n+1}) \) be the corresponding Venn set prediction. As in Section \ref{section::CP}, we define the multicalibrated Venn CP interval as  
\(
\widehat{C}_n(X_{n+1}) := \{y \in \mathcal{Y}: \mathcal{S}((X_{n+1}, y)) \leq f_{n}^{(X_{n+1}, y)}(X_{n+1}) \}.
\)
This multicalibrated CP interval is identical to the interval proposed in the conditional CP framework of \cite{gibbs2023conformal}.

The following theorem shows that Venn multicalibration outputs a prediction set containing a marginally multicalibrated quantile prediction \citep{deng2023happymap}, ensuring that the CP interval is multicalibrated in the sense of \cite{gibbs2023conformal}.

\begin{theorem}[Quantile Multicalibration]
   \label{theorem::multicalquantile} Assume \ref{cond::exchange} holds and \( S_i \neq f_{n+1}^*(X_i) \) almost surely for all \( i \in [n+1] \). Then, the Venn prediction set \( f_{n, X_{n+1}}(X_{n+1}) \) contains the perfectly multicalibrated prediction \( f_{n+1}^*(X_{n+1}) := f_n^{(X_{n+1}, Y_{n+1})}(X_{n+1}) \), which satisfies
   \(
   \mathbb{E}\left[g(X_{n+1})\{(1 - \alpha) - \mathbb{P}(S_{n+1} \leq f_{n+1}^*(X_{n+1}) \mid X_{n+1})\} \right] = 0 \quad \text{for all } g \in \mathcal{G}.
   \)    
\end{theorem}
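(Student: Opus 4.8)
The plan is to derive the statement from Theorem~\ref{theorem::multiVennGeneralLoss}, instantiated with the conformal quantile loss $\ell_{\mathcal{S},\alpha}$, by verifying its empirical multicalibration hypothesis~\ref{cond::ellempcal}, and then to rewrite the first-order identity it produces in terms of conditional coverage. Containment of $f_{n+1}^*(X_{n+1})$ in $f_{n,X_{n+1}}(X_{n+1})$ needs no argument: the set is $\{f_n^{(X_{n+1},y)}(X_{n+1}):y\in\mathcal{Y}\}$, and taking $y=Y_{n+1}$ recovers the oracle prediction.

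The main work is verifying~\ref{cond::ellempcal} for $\ell_{\mathcal{S},\alpha}$. By Algorithm~\ref{alg::multical}, $f_{n+1}^*=f+g_{n+1}^*$ where $g_{n+1}^*\in\argmin_{g\in\mathcal{G}}\sum_{i=1}^{n+1}\ell_{\mathcal{S},\alpha}(f(X_i)+g(X_i),Z_i)$, i.e.\ an unconstrained minimizer over the finite-dimensional linear space $\mathcal{G}$ of a convex objective. The pinball integrand $q\mapsto\ell_{\mathcal{S},\alpha}(q,z)$ is differentiable except at $q=\mathcal{S}(z)$, with $\partial_q\ell_{\mathcal{S},\alpha}(q,z)=\mathbbm{1}\{\mathcal{S}(z)\le q\}-(1-\alpha)$ away from the kink. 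This is where the hypothesis $S_i\neq f_{n+1}^*(X_i)$ for all $i\in[n+1]$ is used: it makes every summand, hence the full objective, differentiable at $g_{n+1}^*$, so the first-order optimality condition holds with exact equality along every direction $g\in\mathcal{G}$,
\[
\sum_{i=1}^{n+1}\frac{\partial}{\partial t}\ell_{\mathcal{S},\alpha}\bigl((f_{n+1}^*+tg)(X_i),Z_i\bigr)\mid_{t=0}=\sum_{i=1}^{n+1}\bigl(\mathbbm{1}\{S_i\le f_{n+1}^*(X_i)\}-(1-\alpha)\bigr)g(X_i)=0,
\]
which is exactly~\ref{cond::ellempcal}. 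Without the no-ties assumption one obtains only that $0$ lies in the subdifferential, i.e.\ the one-sided version mentioned after Theorem~\ref{theorem::pointquantile}.

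Now invoke Theorem~\ref{theorem::multiVennGeneralLoss} with $\ell=\ell_{\mathcal{S},\alpha}$: hypothesis~\ref{cond::exchange} is assumed, the offset minimizer is a symmetric function of the $n+1$ observations (so $\mathcal{A}_{\ell}$ processes the data exchangeably), and~\ref{cond::ellempcal} was just established. The conclusion is $\mathbb{E}\bigl[\frac{\partial}{\partial t}\ell_{\mathcal{S},\alpha}((f_{n+1}^*+tg)(X_{n+1}),Z_{n+1})\mid_{t=0}\bigr]=0$ for all $g\in\mathcal{G}$. Using $S_{n+1}\neq f_{n+1}^*(X_{n+1})$ to evaluate the derivative at the test point, this becomes $\mathbb{E}\bigl[g(X_{n+1})\bigl(\mathbbm{1}\{S_{n+1}\le f_{n+1}^*(X_{n+1})\}-(1-\alpha)\bigr)\bigr]=0$. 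Since $g(X_{n+1})$ depends only on $X_{n+1}$ whereas $f_{n+1}^*(X_{n+1})$ depends on $(X_{n+1},\mathcal{C}_n,Y_{n+1})$, conditioning on $X_{n+1}$ and using the tower property replaces the inner indicator by $\mathbb{P}(S_{n+1}\le f_{n+1}^*(X_{n+1})\mid X_{n+1})$, yielding $\mathbb{E}[g(X_{n+1})\{(1-\alpha)-\mathbb{P}(S_{n+1}\le f_{n+1}^*(X_{n+1})\mid X_{n+1})\}]=0$ for all $g\in\mathcal{G}$, as claimed.

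The main obstacle is the step in the second paragraph: passing from exact stationarity of the non-smooth pinball objective to the \emph{equality} required by~\ref{cond::ellempcal}, rather than just a subgradient inclusion. The hypothesis $S_i\neq f_{n+1}^*(X_i)$ for all $i\in[n+1]$ is precisely what removes this difficulty (and its $i=n+1$ case is reused to evaluate the derivative at the test point); the remaining steps are a direct appeal to Theorem~\ref{theorem::multiVennGeneralLoss} together with elementary conditioning.
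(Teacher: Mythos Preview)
Your proof is correct and follows essentially the same approach as the paper: invoke Theorem~\ref{theorem::multiVennGeneralLoss} with the conformal quantile loss, using the no-ties assumption to ensure the pinball objective is differentiable at the empirical minimizer so that~\ref{cond::ellempcal} holds with equality, and then read off the derivative. You are more explicit than the paper in verifying~\ref{cond::ellempcal} via first-order optimality over the linear class $\mathcal{G}$ and in carrying out the final tower-property rewrite, but the route is the same.
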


  By definition, Theorem \ref{theorem::multicalquantile} implies the multicalibrated coverage of the conformal interval: for  all $g \in \mathcal{G}$, 
\[
\mathbb{E}\left[g(X_{n+1})\{(1 - \alpha) - \mathbb{P}(Y_{n+1} \in \widehat{C}_n(X_{n+1}) \mid X_{n+1})\} \right] = 0  ,
\]
which agrees with Theorem 2 of \cite{gibbs2023conformal}.

As a consequence, the multicalibrated CP framework for finite-dimensional covariate shifts proposed in Section 2.2 of \cite{gibbs2023conformal} can be interpreted as a special case of Venn multicalibration. Similarly, the standard marginal CP approach \citep{vovk2005algorithmic, lei2018distribution} and Mondrian (or group-conditional) CP \citep{vovk2005algorithmic, romano2020malice} are special cases of this algorithm, with \( \mathcal{G} \) consisting of constant functions and subgroup indicators, respectively.

\section{Numerical experiments}

The utility of Venn and Venn-Abers calibration for classification and regression, as well as Venn multicalibration with the quantile loss in the context of conformal prediction (CP), has been demonstrated through synthetic and real data experiments in various works \citep{vovk2012venn, nouretdinov2018inductive, johansson2019calibrating, johansson2019efficient, johansson2023well, vanself, vovk2005algorithmic, lei2018distribution, romano2019conformalized, bostrom2020mondrian, romano2020malice, gibbs2023conformal}. In this section, we evaluate two novel instances of these methods: CP using Venn-Abers calibration with the quantile loss (Section \ref{section::CP}) and Venn multicalibration for regression using the squared error loss.

\subsection{Venn-Abers conformal quantile calibration}

We evaluate conformal prediction intervals constructed using Venn-Abers quantile calibration on real datasets, including the Medical Expenditure Panel Survey (MEPS) dataset \citep{cohen2009medical, MEPS_Panel_21}, as well as the \textit{Concrete}, \textit{Community}, \textit{STAR}, \textit{Bike}, and \textit{Bio} datasets from \cite{romano2019conformalized}, which are available in the \texttt{cqr} package. Each dataset is split into a training set (50\%), a calibration set (30\%), and a test set (20\%). We implement Venn-Abers quantile calibration (\textbf{VA}) using absolute residual error as the conformity score and train the $1-\alpha$ quantile model $f(\cdot)$ of the conformity score using \texttt{xgboost} \citep{xgboost}. The baselines include \textbf{uncalibrated} intervals derived from $f(\cdot)$, a symmetric variant of conformalized quantile regression (\textbf{CQR}) \citep{romano2019conformalized}, \textbf{Marginal} conformal prediction (\textbf{CP}) \citep{vovk2005algorithmic, lei2018distribution}, and Mondrian conformal prediction (\textbf{VM}) \citep{romano2020malice}, with categories based on bins of the estimated $1-\alpha$ quantiles. \textbf{VM} corresponds to \textbf{V}enn calibration with \textbf{M}ondrian histogram binning. For direct comparability, all baselines are based on the absolute residual error score $|y - \mu(x)|$, where $\mu(x)$ is a \texttt{xgboost} predictor of the conditional median of $y$, and intervals are thus centered around $\mu(x)$.

Averaged over 100 random data splits, Table \ref{tab:dataset-results} summarizes Monte Carlo estimates of marginal coverage, and conditional $\ell^1$-calibration error (CCE), and average interval width. For $f_n^*$, the isotonic calibration of $f$, the CCE is defined as 
\[
E_P\left[\max\{0, P(Y \not\in \widehat{C}_n(X) \mid f_n^*(X), \mathcal{C}_n) - \alpha\}\mid \mathcal{C}_n \right].
\] 
All calibrated methods achieve adequate marginal coverage, as guaranteed by theory, while the uncalibrated intervals exhibit poor coverage. \textbf{VA} consistently achieves the lowest or comparable CCE across datasets, as expected from Theorems \ref{theorem::condcalVA2} and \ref{theorem::pointquantile}, outperforming or matching the state-of-the-art \textbf{CQR} in terms of coverage, CCE, and width.~Although \textbf{VM} CP improves with more bins, its CCE remains higher than that of \textbf{VA}, highlighting the advantage of data-adaptive binning via isotonic regression.


\begin{table}[t]
\centering
\caption{Metrics for each dataset: Marginal Coverage, Conditional Calibration Error (CCE), and Average Width. For CCE, smaller values are preferred, and the minimum value for each dataset is bolded. For coverage, values close to 90\% are desired, and the average width is ideally minimized while retaining coverage.}
\label{tab:dataset-results}
\begin{minipage}{\columnwidth}
\centering
\small
\begin{tabular}{lrrrrrr}
\toprule
\textbf{Method} & \textbf{Bike} & \textbf{Bio} & \textbf{Star} & \textbf{Meps} & \textbf{Conc} & \textbf{Com} \\
\midrule
\multicolumn{7}{c}{\textbf{Marginal Coverage}} \\
\midrule
Uncalibrated        & 0.81 & 0.85 & 0.80 & 0.86 & 0.71 & 0.74 \\
Venn-Abers          & 0.90 & 0.90 & 0.90 & 0.90 & 0.90 & 0.90 \\
CQR                 & 0.90 & 0.90 & 0.90 & 0.90 & 0.90 & 0.90 \\
Marginal            & 0.90 & 0.90 & 0.90 & 0.90 & 0.90 & 0.90 \\
VM (5 bin)          & 0.90 & 0.90 & 0.89 & 0.90 & 0.89 & 0.90 \\
VM (10 bin)         & 0.90 & 0.90 & 0.89 & 0.90 & 0.88 & 0.89 \\
\midrule
\multicolumn{7}{c}{\textbf{Conditional Calibration Error (CCE)}} \\
\midrule
Uncalibrated        & 0.11 & 0.088 & 0.11 & 0.053 & 0.20 & 0.17 \\
Venn-Abers          & \textbf{0.019} & \textbf{0.017} & 0.024 & \textbf{0.018} & \textbf{0.035} & \textbf{0.028} \\
CQR                 & 0.031 & 0.020 & 0.026 & 0.020 & 0.037 & 0.031 \\
Marginal            & 0.10 & 0.053 & \textbf{0.020} & 0.052 & 0.057 & 0.058 \\
VM (5 bin)          & 0.033 & 0.025 & 0.023 & 0.026 & 0.044 & 0.030 \\
VM (10 bin)         & 0.022 & 0.020 & 0.028 & 0.022 & 0.049 & 0.030 \\
\midrule
\multicolumn{7}{c}{\textbf{Average Width}} \\
\midrule
Uncalibrated        & 83 & 12 & 620 & 2.4 & 9.4 & 0.22 \\
Venn-Abers          & 100 & 14 & 780 & 2.8 & 17 & 0.40 \\
CQR                 & 98 & 14 & 780 & 2.7 & 16 & 0.38 \\
Marginal            & 140 & 15 & 780 & 2.9 & 18 & 0.46 \\
VM (5 bin)          & 99 & 14 & 770 & 2.8 & 16 & 0.38 \\
VM (10 bin)         & 100 & 14 & 770 & 2.8 & 16 & 0.38 \\
\bottomrule
\end{tabular}
\end{minipage}
\end{table}

\subsection{Venn mean multicalibration}

We evaluate Venn multicalibration for regression with squared error loss on the same datasets as in the previous experiment. To our knowledge, there is no prior work on set multicalibrators, and thus no existing comparators to Algorithm~\ref{alg::multical}. Accordingly, the primary goal of this experiment is to assess the quality of the set-valued predictions in terms of (i) their size and (ii) the calibration error of the oracle multicalibrated prediction that is guaranteed to lie within the set. 

Each dataset is split into a training set (40\%), a calibration set (40\%), and a test set (20\%). We train the model $f$ using median regression with \texttt{xgboost}, such that the model is miscalibrated for the mean when the outcomes are skewed. We apply Alg.~\ref{alg::multical} with $\mathcal{G}$ defined as the linear span of an additive spline basis of the features, aiming for multicalibration over additive functions. Specifically, for continuous features, we generate cubic splines with five knot points, and for categorical features, we apply one-hot encoding. As baselines, we consider the uncalibrated model and the point-calibrated model obtained by adjusting $f$ via offset linear regression on $\mathcal{C}_n$ based on $\mathcal{G}$. All outcomes are rescaled to lie in $[0,1]$ for comparability across datasets.

Averaged averaged over 100 random data splits, Table \ref{tab:exp2} summarizes the sample size $(n)$, feature dimension $(p)$, conditional multicalibration errors for the uncalibrated, point calibrated, and oracle Venn-calibrated predictions, and the average Venn prediction set width. The oracle Venn-calibrated prediction is the marginally perfectly calibrated prediction $f_{n+1}^{(X_{n+1}, Y_{n+1})}$ necessarily contained in the prediction set. For a basis \( \{b_j(\cdot)\}_{j=1}^m \) of \( \mathcal{G} \), the multicalibration error of a model \( \widehat{f} \) is defined as the \( \ell^2 \) norm of the test-set in-sample calibration errors:
\begin{align*}
    \left\{ \frac{1}{n_{\text{test}}} \sum_{i=1}^{n_{\text{test}}} b_j(X_i) \{Y_i - \widehat{f}(X_i)\} : j \in [m] \right\}.
\end{align*}
This error quantifies how well the model satisfies the multicalibration criterion across a rich collection of (potentially overlapping) subpopulations defined by the functions \( b_j \), as defined in \eqref{eqn::multical}. In particular, it captures calibration error both for discrete subgroups (e.g., based on binary covariates) and for continuous covariates through smooth density ratio weights.

\begin{table}[ht]
\vspace{-0.75cm}
\centering
\caption{Sample size and feature dimension $(n, p)$, calibration errors for the uncalibrated model, calibrated model, Venn-calibrated model, and mean prediction set width for each dataset.}
\begin{tabular}{@{\hskip 2pt}l@{\hskip 6pt}r|rrr|r@{\hskip 2pt}}
\hline
 & Dim.  & \multicolumn{3}{c|}{Calibration Error} & Width \\
\hline
 & $(n, p)$ & \multicolumn{1}{c}{Uncal} & \multicolumn{1}{c}{Calibr} & \multicolumn{1}{c|}{Venn} & \multicolumn{1}{c}{Venn} \\
\hline
Bike       & (4354, 18)   & 0.0019 & 0.0015 & \textbf{0.0015} & 0.0086 \\
Bio        & (18292, 9)   & \textbf{0.0073} & 0.0100 & 0.0094 & 0.0100 \\
Star       & (864, 39)    & 0.0098 & 0.0113 & \textbf{0.0078} & 0.3260 \\
Meps       & (6262, 139)  & 0.0032 & 0.0017 & \textbf{0.0016} & 0.0088 \\
Conc       & (412, 8)     & 0.0077 & 0.0081 & \textbf{0.0064} & 0.1430 \\
Comm       & (797, 101)   & 0.0099 & 0.0209 & \textbf{0.0055} & 0.6650 \\
\hline
\end{tabular}
\label{tab:exp2}
\end{table}

The oracle Venn-calibrated model consistently achieves smaller calibration errors than the point-calibrated model across all datasets and outperforms the uncalibrated model in all but one dataset. Its improvement is more pronounced in settings with wider Venn prediction sets, which correspond to smaller effective sample sizes $\frac{n}{p}$. In these cases, naive multicalibration is more variable and prone to overfitting. This aligns with expectations, as wider prediction sets reflect greater uncertainty in the finite-sample calibration of point-calibrated predictions.

\section*{Impact Statement}

This paper presents work whose goal is to advance the field of 
Machine Learning. There are many potential societal consequences 
of our work, none which we feel must be specifically highlighted here.

\bibliographystyle{plainnat}

\bibliography{ref}

\appendix

\onecolumn

\section{Code Availability}

Python code implementing \textit{Venn-Abers} and \textit{Venn multicalibration} methods for both squared error and quantile losses is available in the \texttt{VennCalibration} package at the following GitHub repository:

\begin{center}
\url{https://github.com/Larsvanderlaan/VennCalibration}
\end{center}

The repository includes scripts and documentation for reproducing all experiments in this paper.

\section{Background on isotonic calibration}

Isotonic calibration \citep{zadrozny2002transforming, niculescu2005obtaining} is a data-adaptive histogram binning method that learns the bins using isotonic regression, a nonparametric method traditionally used for estimating monotone functions \citep{barlow1972isotonic, groeneboom1993isotonic}. Specifically, the bins are selected by minimizing an empirical MSE criterion under the constraint that the calibrated predictor is a non-decreasing monotone transformation of the original predictor. Isotonic calibration is motivated by the heuristic that, for a good predictor $f$, the calibration function $\theta_{P,f}$ should be approximately monotone as a function of $f$. For instance, when $f(\cdot) = E_P[Y \mid X = \cdot]$, the mapping $f \mapsto \theta_{P,f} = f$ is the identity function. Isotonic calibration is distribution-free --- it does not rely on monotonicity assumptions --- and, in contrast with histogram binning, it is tuning parameter-free and naturally preserves the mean-square error of the original predictor (as the identity transform is monotonic) \citep{van2023causal}.

For clarity, we focus on the regression case where $\ell$ denotes the squared error loss. Formally, isotonic calibration takes a predictor $f$ and a calibration dataset $\mathcal{C}_n$ and produces the calibrated model $f_n^* := \theta_n \circ f$, where $\theta_n: \mathbb{R} \rightarrow \mathbb{R}$ is an isotonic step function obtained by solving the optimization problem:
\begin{equation}
    \theta_n \in \argmin_{\theta \in \Theta_{\text{iso}}} \sum_{i=1}^n \left\{Y_i - \theta(f(X_i))\right\}^2,
\end{equation}
where $\Theta_{\text{iso}}$ denotes the set of all univariate, piecewise constant functions that are monotonically nondecreasing. Following \citet{groeneboom1993isotonic}, we consider the unique c\`{a}dl\`{a}g piecewise constant solution to the isotonic regression problem, which has jumps only at observed values in \( \{f(X_i): i \in [n]\} \). The first-order optimality conditions of the convex optimization problem imply that the isotonic solution $\theta_n$ acts as a binning calibrator with respect to a data-adaptive set of bins determined by the jump points of the step function $\theta_n$. Thus, isotonic calibration provides perfect in-sample calibration. Specifically, for any transformation $g: \mathbb{R} \rightarrow \mathbb{R}$, the perturbed step function $\varepsilon \mapsto \theta_n + \varepsilon (g \circ \theta_n)$ remains isotonic for all sufficiently small $\varepsilon$ such that $|\varepsilon| \sup_{t \in f(\mathcal{X})}| (g \circ \theta_n)(t)|$ is less than the maximum jump size of $\theta_n$, given by $\sup_{t \in f(\mathcal{X})}|\theta_n(t) - \theta_n(t-)|$. Since $\theta_n$ minimizes the empirical mean square error criterion over all isotonic functions, it follows that, for each function $g: \mathbb{R} \rightarrow \mathbb{R}$, the following condition holds:
\begin{align*}
    \frac{d}{d\varepsilon} \frac{1}{2} \sum_{i=1}^n \left\{Y_i - \theta_n(f(X_i)) - \varepsilon g(\theta_n(f(X_i)))\right\}^2 \Big|_{\varepsilon = 0}  =  \sum_{i=1}^n g(f_n^*(X_i))\{Y_i - f_n^*(X_i)\} = 0.
\end{align*}
These orthogonality conditions are equivalent to perfect in-sample calibration. In particular, by taking $g$ as the level set indicator $t \mapsto \mathbb{I}(t = \theta_n(f(x)))$, we conclude that the isotonic calibrated predictor $f_n^*$ is in-sample calibrated.

\section{Proofs}

\subsection{Proofs for Venn calibration}

\begin{proof}[Proof of Theorem \ref{theorem::VennGeneral}]
From \ref{cond::empcal}, we know that
    \[
 \sum_{i=1}^n \mathbbm{1}\{f_{n+1}^*(X_i) = f_{n+1}^*(x)\} \partial \ell(f_{n+1}^*(X_i), Z_i) = 0.
\]
This condition implies, for every transformation $g: \mathbb{R} \rightarrow \mathbb{R}$, that
    \[
 \sum_{i=1}^n g(f_{n+1}^*(X_i)) \partial \ell(f_{n+1}^*(X_i), Z_i) = 0.
\]
Taking the expectation of both sides, we find that
\begin{align*}
   0 &= \mathbb{E} \left[ \sum_{i=1}^n g(f_{n+1}^*(X_i))  \partial \ell(f_{n+1}^*(X_i), Z_i)  \right]\\
   &=  \sum_{i=1}^n \mathbb{E} \left[  g(f_{n+1}^*(X_i)) \partial \ell(f_{n+1}^*(X_i), Z_i)  \right].
\end{align*}
Note that $f_{n+1}^*$ is trained on all of $\mathcal{C}_{n+1}^*$ and is thus invariant to permutations of $\{(X_i, Y_i)\}_{i=1}^{n+1}$. Since $\{(X_i, Y_i)\}_{i=1}^{n+1}$ are exchangeable by \ref{cond::exchange}, it follows that $ g(f_{n+1}^*(X_i)) \partial \ell(f_{n+1}^*(X_i), Z_i) $ is exchangeable over $i \in [n+1]$. Thus, the previous display implies, for every transformation $g: \mathbb{R} \rightarrow \mathbb{R}$, that
\begin{align*}
   0 &=  \sum_{i=1}^n \mathbb{E} \left[  g(f_{n+1}^*(X_{n+1})) \partial \ell(f_{n+1}^*(X_{n+1}), Z_{n+1})  \right]\\
    0 &=  \mathbb{E} \left[  g(f_{n+1}^*(X_{n+1})) \partial \ell(f_{n+1}^*(X_{n+1}), Z_{n+1})  \right]\\
     0 &=  \mathbb{E} \left[  g(f_{n+1}^*(X_{n+1})) \mathbb{E}[\partial \ell(f_{n+1}^*(X_{n+1}), Z_{n+1}) \mid f_{n+1}^*(X_{n+1})]  \right],
\end{align*}
where the final equality follows from the law of total expectation.
Taking $g$ such that $ g(f_{n+1}^*(x)) = \mathbb{E}[\partial \ell(f_{n+1}^*(X_{n+1}), Z_{n+1}) \mid f_{n+1}^*(X_{n+1}) = f_{n+1}^*(x)]$, which exists by \ref{cond::variance}, we conclude that
$$ \mathbb{E} \left[\left\{\mathbb{E}[\partial \ell(f_{n+1}^*(X_{n+1}), Z_{n+1}) \mid f_{n+1}^*(X_{n+1})]  \right\}^2 \right] = 0.$$

\end{proof}

\begin{proof}[Proof of Theorem  \ref{theorem::condcalVA} ]

For a uniformly bounded function class $\mathcal{F}$, let $N(\epsilon,\mathcal{F},L_2(P))$ denote the $\epsilon-$covering number \citep{van1996weak} of $\mathcal{F}$ with respect to $L_2(P)$ and define the uniform entropy integral of $\mathcal{F}$ by  
\begin{equation*}
\mathcal{J}(\delta,\mathcal{F})\coloneq \int_{0}^{\delta} \sup_{Q}\sqrt{\log N(\epsilon,\mathcal{F},L_2(Q))}\,d\epsilon\ ,
\end{equation*}
where the supremum is taken over all discrete probability distributions $Q$. For two quantities $x$ and $y$, we use the expression  $x \lesssim y$ to mean that $x$ is upper bounded by $y$ times a universal constant that may only depend on global constants that appear in our conditions.

 We know that \( \theta_{n}^{(x,y)} \) almost surely belongs to a uniformly bounded function class \( \mathcal{F}_n \) consisting of 1D functions with at most $k(n)$ constant segments. Then, \( \mathcal{F}_n \)  has finite uniform entropy integral with \( \mathcal{J}(\delta, \mathcal{F}_n) \lesssim \delta \sqrt{k(n) \log (1/\delta)} \). Define $\mathcal{F}_{f,n} := \{\theta \circ f: \theta \in \mathcal{F}_n\}$. We claim that \( \mathcal{J}(\delta, \mathcal{F}_{f,n}) \lesssim \delta \sqrt{k(n) \log (1/\delta)} \). This follows since, by the change-of-variables formula,
\begin{align*}
\mathcal{J}(\delta, \mathcal{F}_{f,n}) &=  \int_0^{\delta} \sup_Q \sqrt{N\big(\varepsilon, \mathcal{F}_{f,n}, \|\,\cdot\,\|_Q \big)}\,\mathrm{d}\varepsilon \\
&=  \int_0^{\delta} \sup_Q \sqrt{N\big(\varepsilon, \mathcal{F}_{n}, \|\,\cdot\,\|_{Q \circ f} \big)}\, \mathrm{d}\varepsilon \\
&= \mathcal{J}(\delta, \mathcal{F}_{n}).
\end{align*}
where, with a slight abuse of notation, \(Q \circ f\) is the push-forward probability measure for the random variable \(f(X)\).

 By assumption, we have perfect in-sample calibration: for all \( g: \mathbb{R} \rightarrow \mathbb{R} \),
\begin{align*}
    \sum_{i=1}^n g(\theta_{n}^{(x,y)}(f(X_i))) \partial \ell(\theta_{n}^{(x,y)}(f(X_i), Y_i)) +   g(\theta_{n}^{(x,y)}(f(x))) \partial \ell(\theta_{n}^{(x,y)}(f(x)), y) = 0.
\end{align*}
Take \( g \) such that \( g \circ \theta_{n}^{(x,y)} \) equals \( t \mapsto E_P[\partial \ell(\theta_{n}^{(x,y)}(f(x)), y) \mid \theta_{n}^{(x,y)}(f(X)) = t, \mathcal{C}_n] \). Then, denoting \( \gamma_f(\theta_{n}^{(x,y)}, \cdot): x' \mapsto E_P[\partial \ell(\theta_{n}^{(x,y)}(f(x')), z') \mid \theta_{n}^{(x,y)}(f(X)) = \theta_{n}^{(x,y)}(f(x')), \mathcal{C}_n] \), we find that
\begin{align*}
  \sum_{i=1}^n \gamma_f(\theta_{n}^{(x,y)}, X_i) \partial \ell(\theta_{n}^{(x,y)}(f(X_i), Y_i)) + \gamma_f(\theta_{n}^{(x,y)}, x) \partial \ell(\theta_{n}^{(x,y)}(f(x)), y) = 0.
\end{align*}
By assumption, $\gamma_f(\theta_{n}^{(x,y)}, X) \partial \ell(\theta_{n}^{(x,y)}(f(x)), y)$ is uniformly bounded, such that
\begin{align*}
  \frac{1}{n}\sum_{i=1}^n \gamma_f(\theta_{n}^{(x,y)}, X_i) \partial \ell(\theta_{n}^{(x,y)}(f(X_i), Y_i))  = O(n^{-1}).
\end{align*}

Adding and subtracting, we have that
\begin{align*}
 P_n \gamma_f(\theta_{n}^{(x,y)}, \cdot) \partial \ell(\theta_{n}^{(x,y)}(f(\cdot), \cdot))  &= O(n^{-1})\\
 P \gamma_f(\theta_{n}^{(x,y)}, \cdot) \partial \ell(\theta_{n}^{(x,y)}(f(\cdot), \cdot)) +  (P_n - P) \gamma_f(\theta_{n}^{(x,y)}, \cdot) \partial \ell(\theta_{n}^{(x,y)}(f(\cdot), \cdot))  &= O(n^{-1})\\
 P \{\gamma_f(\theta_{n}^{(x,y)}, \cdot)\}^2 +  (P_n - P) \gamma_f(\theta_{n}^{(x,y)}, \cdot) \partial \ell(\theta_{n}^{(x,y)}(f(\cdot), \cdot))  &= O(n^{-1}) ,
\end{align*}
 where, in the final equality, we used that  $P \gamma_f(\theta_{n}^{(x,y)}, \cdot) \partial \ell(\theta_{n}^{(x,y)}(f(\cdot), \cdot)) =  P \{\gamma_f(\theta_{n}^{(x,y)}, \cdot)\}^2$ by the law of total expectation.

The random quantity we wish to bound by $ \widehat{\delta}_n^2  P \{\gamma_f(\theta_{n}^{(x,y)}, \cdot)\}^2$. Then, the previous display implies
\begin{align*}
    \widehat{\delta}_n^2 &\leq \sup_{\theta \in \mathcal{F}_n: \|\gamma_f(\theta, \cdot)\|_P \leq \widehat{\delta}_n} \left|(P_n - P) \gamma_f(\theta, \cdot) \partial \ell(\theta(f(\cdot), \cdot)) \right| + O(n^{-1}).
\end{align*} 
By boundedness of $ \partial \ell(\theta(f(\cdot), \cdot))$, $\|\gamma_f(\theta, \cdot) \partial \ell(\theta(f(\cdot), \cdot))\|_P \leq K \|\gamma_f(\theta, \cdot) \|_P$ for some $K < \infty$. Thus,
\begin{align*}
    \widehat{\delta}_n^2 &\leq \sup_{g \in \mathcal{G}_{f}: \|g\|_P \leq K \widehat{\delta}_n} \left|(P_n - P)g\right| + O(n^{-1}),
\end{align*} 
where $\mathcal{G}_{f} := \{g_1 g_2: g_1 \in \mathcal{G}_{1,f}, g_2 \in \mathcal{G}_{2,f}\}$ with $\mathcal{G}_{1,f} := \left\{\partial \ell(\theta(f(\cdot), \cdot)): \theta \in \mathcal{F}_n \right\}$ and $\mathcal{G}_{2,f} := \left\{\gamma_f(\theta , \cdot): \theta \in \mathcal{F}_n \right\}$.

We claim that $\mathcal{J}(\delta, \mathcal{G}_f) \lesssim \mathcal{J}(\delta, \mathcal{F}_{n}) \lesssim \delta \sqrt{k(n) \log (1/\delta)}$ By assumption, the following Lipschitz condition holds almost surely:
\(
\left|\partial \ell(\theta_1(f(X)), Y) - \partial \ell(\theta_2(f(X)), Y) \right| \lesssim \left|\theta_1(f(X)) - \theta_2(f(X)) \right|.
\)
It follows that \( \mathcal{J}(\delta, \mathcal{G}_{1,f}) \lesssim \mathcal{J}(\delta, \mathcal{F}_{f,n}) \). Moreover, \( \mathcal{J}(\delta, \mathcal{G}_{2,f}) \lesssim \mathcal{J}(\delta, \mathcal{F}_{f,n}) \), since \( \gamma_f(\theta, \cdot) \in \mathcal{F}_{f,n} \) is a piecewise constant function with at most \( k(n) \) constant segments for each \( \theta \in \mathcal{F}_n \). Therefore, \( \mathcal{J}(\delta, \mathcal{G}_f) \lesssim \mathcal{J}(\delta, \mathcal{F}_{f,n})  \lesssim \mathcal{J}(\delta, \mathcal{F}_{n}) \) and the claim  follows.

Define \( \phi_{n}(\delta) := \sup_{g \in \mathcal{G}_{f}: \|g\|_P \leq K \delta} \left|(P_n - P)g\right| \). Then, we can write
\[
\widehat{\delta}_n^2 \leq \phi_{n}(\widehat{\delta}_n) + O(n^{-1}).
\]

Applying Theorem 2.1 in \cite{van2011local}, we have that for any \( \delta \) satisfying \( \sqrt{n} \delta^2 \gtrsim \mathcal{J}(\delta, \mathcal{G}_f) \),
\[
\mathbb{E}[\phi_{n}(\delta)] \lesssim n^{-\frac{1}{2}} \mathcal{J}(\delta, \mathcal{G}_f).
\]
Consequently, since \( \mathcal{J}(\delta, \mathcal{G}_f) \lesssim \mathcal{J}(\delta, \mathcal{F}_{n}) \lesssim \delta \sqrt{k(n) \log (1/\delta)} \), it follows that for any \( \delta \geq \sqrt{\frac{k(n) \log (1/ \delta)}{n}} \),
\[
\mathbb{E}[\phi_{n}(\delta)] \lesssim  \delta \sqrt{k(n) \log (1/\delta) /n}.
\]

It can be shown that  \( \delta_n^2 := k(n) \log (n/ k(n)) \) satisfies the critical inequality $\delta_n \geq \sqrt{\frac{k(n) \log (1/ \delta_n)}{n}}$, such that the previous identifies can be applied with $\delta := \delta_n$. Showing the asserted stochastic order, \( \widehat{\delta}_n^2 = O_p(\delta_n^2) \) with \( \delta_n^2 := k(n) \log (n / k(n)) \), is equivalent to demonstrating that for all \( \epsilon > 0 \), there exists a sufficiently large \( 2^S \) such that 
\[
\limsup_{n \to \infty} \mathbb{P}( \delta_n^{-2} \widehat{\delta}_n^2 > 2^S) < \epsilon.
\]
To this end, we need to show \( \lim_{n \to \infty} \mathbb{P}(\delta_n^{-2} \widehat{\delta}_n^2 > 2^S) \to 0 \) as \( S \to \infty \). Define the event \( A_s \coloneq \left\{ \delta_n^{-2} \widehat{\delta}_n^2 \in (2^s, 2^{s+1}] \right\} \) for each \( s \). Using a peeling argument and Markov's inequality, we obtain
\begin{align*} 
    \mathbb{P}\big(\delta_n^{-2} \widehat{\delta}_n^2 > 2^S \big) 
    &\leq \sum_{s=S}^\infty \mathbb{P} \big(2^{s+1} \geq \delta_n^{-2} \widehat{\delta}_n^2 > 2^s \big)   \\
    &\leq \sum_{s=S}^\infty \mathbb{P} \big(A_s, \widehat{\delta}_n^2 \leq \phi_{n}(\widehat{\delta}_n) + O(n^{-1})   \big)  \\
    &\leq \sum_{s=S}^\infty \mathbb{P} \big(A_s, \widehat{\delta}_n^2 \leq \phi_{n}(\widehat{\delta}_n)  + O(n^{-1}) \big)  \\
    &\leq \sum_{s=S}^\infty \mathbb{P} \big( \delta_n^{2} 2^s < \widehat{\delta}_n^2 \leq \phi_{n}\big(\delta_n 2^{\frac{s+1}{2}}\big)  + O(n^{-1})  \big)  \\
    &\leq \sum_{s=S}^\infty \mathbb{P} \big( \delta_n^{2} 2^s < \phi_{n}\big(\delta_n 2^{\frac{s+1}{2}}\big)   + O(n^{-1})  \big)   \\
    &\leq \sum_{s=S}^\infty \frac{\mathbb{E}\big[\phi_{n}\big(\delta_n 2^{\frac{s+1}{2}}\big)\big]  + O(n^{-1}) }{\delta_n^{2} 2^s}  
    \leq \sum_{s=S}^\infty \frac{n^{-1/2}\delta_n 2^{\frac{s+1}{2}} \sqrt{k(n) \log (n / k(n))} + O(n^{-1})  }{\delta_n^{2} 2^s}  \\
    &\leq \sum_{s=S}^\infty \frac{2^{\frac{s+1}{2}} + O(1)}{2^s} \rightarrow_{S \rightarrow \infty} 0.
\end{align*}
Thus, \( \widehat{\delta}_n^2 = O_p(\delta_n^2) \) and the result follows.

\end{proof}

\begin{proof}[Proof of Theorem \ref{theorem::condcalVA2} ]
This proof follows from a generalization of the proofs of Theorem 1 for treatment effect calibration and propensity score calibration in \cite{van2023causal} and \cite{van2024stabilized}.

Recall that \( f_{n}^{(x,y)} = \theta_n^{(x,y)} \circ f \). Under \ref{cond::TV}, up to a change of notation, the proof of Lemma 3 establishes that the map \( t \mapsto E_P[\partial \ell(f_{n}^{(x,y)}(X), Z) \mid f_{n}^{(x,y)}(X) = \theta_n^{(x,y)}(t), \mathcal{C}_n] \) has a total variation norm almost surely bounded by \( 3B \). Consequently, the function \( \gamma_f(\theta_n^{(x,y)}, \cdot): x \mapsto E_P[\partial \ell(f_{n}^{(x,y)}(X), Z) \mid f_{n}^{(x,y)}(X) = f_{n}^{(x,y)}(x), \mathcal{C}_n] \) is a transformation of \( f \) with a total variation norm almost surely bounded by \( 3B \).

Let \( \mathcal{F}_{TV} \) denote the space of 1D functions with total variation norm bounded by \( 3B \). Let \( \mathcal{F}_{iso} \) denote the space of isotonic functions that are uniformly bounded, such that the isotonic regression solution \( \theta_n^{(x,y)} \) belongs to this set. Note that \( \mathcal{J}(\delta, \mathcal{F}_{TV}) \lesssim \sqrt{\delta} \) and \( \mathcal{J}(\delta, \mathcal{F}_{iso}) \lesssim \sqrt{\delta} \).

Proceeding exactly as in the proof of Theorem \ref{theorem::condcalVA}, we can show that the quantity we wish to bound, \( \widehat{\delta}_n^2 := P \{\gamma_f(\theta_{n}^{(x,y)}, \cdot)\}^2 \), satisfies
\[
\widehat{\delta}_n^2 \leq \sup_{\theta \in \mathcal{F}_n: \|\gamma_f(\theta, \cdot)\|_P \leq \widehat{\delta}_n} \left|(P_n - P) \gamma_f(\theta, \cdot) \partial \ell(\theta(f(\cdot), \cdot)) \right| + O(n^{-1}).
\]
By the boundedness of \( \partial \ell(\theta(f(\cdot), \cdot)) \), we have \( \|\gamma_f(\theta, \cdot) \partial \ell(\theta(f(\cdot), \cdot))\|_P \leq K \|\gamma_f(\theta, \cdot)\|_P \) for some \( K < \infty \). Thus,
\[
\widehat{\delta}_n^2 \leq \sup_{g \in \mathcal{G}_{f}: \|g\|_P \leq K \widehat{\delta}_n} \left|(P_n - P)g\right| + O(n^{-1}),
\]
where \( \mathcal{G}_{f} := \{g_1 g_2: g_1 \in \mathcal{G}_{1,f}, g_2 \in \mathcal{G}_{2,f}\} \) with \( \mathcal{G}_{1,f} := \left\{\partial \ell(\theta(f(\cdot), \cdot)): \theta \in \mathcal{F}_{TV} \right\} \) and \( \mathcal{G}_{2,f} := \left\{\gamma_f(\theta, \cdot): \theta \in \mathcal{F}_{iso} \right\} \). An argument similar to the proof of Theorem \ref{theorem::condcalVA} shows that \( \mathcal{J}(\delta, \mathcal{G}_{f}) \lesssim \mathcal{J}(\delta, \mathcal{F}_{TV}) + \mathcal{J}(\delta, \mathcal{F}_{iso}) \lesssim \sqrt{\delta} \).

The result now follows by applying an argument identical to the proof of Theorem \ref{theorem::condcalVA}, where we set \( \delta_n^2 := n^{-2/3} \) and use \( \mathcal{J}(\delta, \mathcal{G}_{f}) \lesssim \sqrt{\delta} \).

\end{proof}

\subsection{Proofs for Venn multicalibration}

\begin{proof}[Proof of Theorem \ref{theorem::multiVennGeneralLoss}]
By \ref{cond::ellempcal}, we have almost surely for each \( g \in \mathcal{G} \) that
\[
\sum_{i=1}^{n+1} \frac{\partial}{\partial t} \ell((f_{n+1}^* + t g)(X_i), Z_i) \bigg|_{t = 0} = 0.
\]
Taking the expectations of both sides above and leveraging \ref{cond::exchange}, we have
\begin{align*}
\mathbb{E} \left[\sum_{i=1}^{n+1} \frac{\partial}{\partial t} \ell((f_{n+1}^* + t g)(X_i), Z_i) \bigg|_{t = 0} \right] &= 0, \\ 
\sum_{i=1}^{n+1} \mathbb{E} \left[ \frac{\partial}{\partial t} \ell((f_{n+1}^* + t g)(X_i), Z_i) \bigg|_{t = 0} \right] &= 0, \\
\sum_{i=1}^{n+1} \mathbb{E} \left[ \frac{\partial}{\partial t} \ell((f_{n+1}^* + t g)(X_{n+1}), Z_{n+1}) \bigg|_{t = 0} \right] &= 0, \\
\mathbb{E} \left[ \frac{\partial}{\partial t} \ell((f_{n+1}^* + t g)(X_{n+1}), Z_{n+1}) \bigg|_{t = 0} \right] &= 0,
\end{align*}
as desired.
\end{proof}

 \begin{proof}[Proof of Corollary  \ref{corollary::multical}]
This result is a direct consequence of Theorem \ref{theorem::multiVennGeneralLoss}, but we provide an independent proof for clarity and completeness.

Define $f_{n+1}^* := f_n^{(X_{n+1}, Y_{n+1})}$, and note that $f_{n+1}^* (X_{n+1})$ is an element of $f_{n, X_{n+1}}(X_{n+1})$ by construction. The first-order optimality conditions of the empirical risk minimizer $g_n^{(X_{n+1}, Y_{n+1})}$ imply that, for each $g \in \mathcal{G}$,
\[
\frac{1}{n+1} \sum_{i=1}^{n+1} g(X_i) \left\{Y_i - f_{n+1}^*(X_i)\right\} = 0.
\]
Taking the expectation of both sides, which we can do by \ref{cond::variance}, and leveraging \ref{cond::exchange}, we find that
\begin{align*}
0 &= \frac{1}{n+1} \sum_{i=1}^{n+1} \mathbb{E} \left[g(X_i) \left\{Y_i - f_{n+1}^*(X_i)\right\} \right] \\
&= \frac{1}{n+1} \sum_{i=1}^{n+1} \mathbb{E} \left[g(X_{n+1}) \left\{Y_{n+1} - f_{n+1}^*(X_{n+1})\right\} \right] \\
&= \mathbb{E} \left[g(X_{n+1}) \left\{Y_{n+1} - f_{n+1}^*(X_{n+1})\right\} \right].
\end{align*}

\end{proof}

\subsection{Proofs for conformal prediction}

 \begin{proof}{Proof of Theorem \ref{theorem::pointquantile}}
   Under the assumption that \( S_i \neq f_{n+1}^*(X_i) \) almost surely for all \( i \in [n+1] \), it is shown in Section 2.2 of \cite{gibbs2023conformal} that the quantile loss is differentiable almost surely. Moreover, its derivative is given by
   $$\partial \ell_{\alpha}(f(x), S(z))   = (1 - \alpha) - \mathbbm{1}\{S(x) \geq f(x)\}.$$
   The result now follows by application of Theorem \ref{theorem::VennGeneral}.
\end{proof}

  \begin{proof}{Proof of Theorem \ref{theorem::multicalquantile}}
   Under the assumption that \( S_i \neq f_{n+1}^*(X_i) \) almost surely for all \( i \in [n+1] \), it is shown in Section 2.2 of \cite{gibbs2023conformal} that the quantile loss is differentiable almost surely. Moreover, its derivative is given by
   $$\frac{d}{d\varepsilon} \ell_{\alpha}((f + g\varepsilon)(x), S(z)) \big |_{\varepsilon = 0}   = g(x) \left[(1 - \alpha) - \mathbbm{1}\{S(x) \geq f(x)\}\right].$$
   The result now follows by application of Theorem \ref{theorem::multiVennGeneralLoss}.
\end{proof}

\end{document}